\documentclass[12pt]{article}

\usepackage[margin=2.5cm]{geometry}
\usepackage{times}
\usepackage[utf8]{inputenc}
\usepackage[english]{babel}

\usepackage{amsmath, amssymb, amsthm, mathtools, bm, bbm}
\allowdisplaybreaks

\usepackage{graphicx}
\usepackage{booktabs, multirow}
\usepackage{subcaption} 

\usepackage{algorithm}
\usepackage[noend]{algpseudocode}

\usepackage[numbers]{natbib}
\usepackage{hyperref}
\usepackage{authblk}
\usepackage{setspace}
\numberwithin{equation}{section}

\title{\bf Approximating Smooth Functionals with ReLU Networks}
\author{Shuhao Jiao\thanks{shuhao.jiao@cityu.edu.hk}}
\affil{Department of Biostatistics\\ City University of Hong Kong}
\date{}

\begin{document}
	\maketitle			
	\setlength\parindent{0pt}
	\setlength{\parskip}{1em}
	\theoremstyle{definition}
	\newtheorem{theorem}{Theorem}
	\newtheorem{assumption}{Assumption}
	\newtheorem{lemma}{Lemma}
	\newtheorem{remark}{Remark}
	\newtheorem{proposition}{Proposition}
	\newtheorem{definition}{Definition}
	\newtheorem{corollary}{Corollary}
	\newtheorem{example}{Example}

\begin{abstract}
We study the uniform approximation of smooth scalar-valued functionals on an infinite-dimensional separable Hilbert space by  ReLU neural networks. A key feature in deep learning for functional data is the varying importance of different coordinates/dimensions. Representing the functional input in a basis expansion, we quantify the importance of each coordinate through both the magnitude of its corresponding basis score and the directional sensitivity of the target functional. Our analysis combines coordinate truncation, anisotropic partitioning, local Taylor approximation, and ReLU network realization, while allowing unrestricted interactions among the retained coordinates. We establish a general nonasymptotic upper bound for the uniform approximation error and a complementary pseudo-dimension-based lower bound for the worst-case approximation error.  Under generalized exponential coordinate decay $w_ds_d\asymp\exp(-cd^\rho)$, with $\rho>0$, the upper and lower bounds match at the leading order, which is stretched-exponential in the logarithm of the network size budget, and thus yield the nearly optimal approximation rate. This is the first work to characterize neural network approximation error for infinite-dimensional functional inputs explicitly through the joint dimensional decay of coordinate magnitudes and directional sensitivities.
\end{abstract}

\section{Introduction}

The approximation of nonlinear functionals defined on infinite-dimensional spaces is a fundamental problem in functional data analysis. Neural network architectures for functional inputs have a  long history. \citet{rossi2005functional} extended multilayer perceptrons to functional inputs and established universal approximation and consistency results, while \citet{rossi2005representation} investigated basis-based representations of functional observations for multilayer perceptrons and radial-basis-function networks. More recent studies have developed network architectures that more explicitly preserve and exploit functional structure. \citet{yao2021deep} introduced adaptive basis layers that learn task-dependent finite-dimensional representations, and \citet{thind2023deep} incorporated functional input layers into densely connected neural networks. \citet{rao2023nonlinear} proposed functional neural networks with continuous hidden layers and basis representations, whereas \citet{ruegamer2024functional} extended semi-structured neural networks to functional covariates. From an approximation-theoretic perspective, \citet{song2023approximation} derived quantitative approximation rates for smooth functionals using ReLU networks, and \citet{shi2025nonlinear} developed a kernel-embedding-based functional neural network together with corresponding approximation guarantees. These developments demonstrate the growing potential of neural networks for learning nonlinear relationships from functional inputs.

Let \(\mathcal H=L^2[0,1]\) denote the space of square-integrable functions on \([0,1]\), and let \(f_0\colon\mathcal H\to\mathbb R\) be an unknown scalar-valued functional. Given a fixed set of orthonormal basis functions \(\{\nu_d(t)\colon d\geq1,\ t\in[0,1]\}\) of \(\mathcal H\), each \(X(t)\in\mathcal H\) admits the representation \(X(t)=\sum_{d\geq1}\xi_d\nu_d(t)\), where \(\xi_d=\langle X,\nu_d\rangle\coloneq \int_0^1X(t)v_d(t)dt\). Although the input \(X(t)\) is intrinsically infinite-dimensional, any implementable neural network can depend on only finitely many coordinates. Therefore, for some $D\in\mathbb Z_+$, we approximate \(f_0(X)\) by a ReLU neural network \(f_\phi(\xi_1,\ldots,\xi_D)\) based on the first \(D\) basis scores and study the resulting uniform approximation error over a prescribed functional input domain. Under a suitable choice of basis (e.g., functional principal components), the leading scores can provide a parsimonious representation of the dominant variation in the functional input. In contrast, pointwise measurements are typically highly correlated, depend on the observation grid, and do not yield a transparent characterization of coordinate importance.

A central feature of this problem is that the basis coordinates generally contribute unequally to the target functional. The contribution of the \(d\)-th coordinate is governed by the directional sensitivity of \(f_0\) along \(\nu_d(t)\) and the associated score magnitude. Coordinates with both large score magnitudes and strong directional sensitivities must be approximated accurately, whereas those with small score magnitudes or weak directional sensitivities can be truncated with limited loss. We consider a natural setting in which directional sensitivity decays with the coordinate index $d$. Our objective is to characterize how this coordinate-decay structure, together with the smoothness of \(f_0\) and the network width--depth budget, determines the approximation power of ReLU neural networks.

There is a substantial literature on neural network approximation theory.
For ReLU networks, \citet{yarotsky2017error,yarotsky2018optimal}
derived quantitative upper and lower bounds for smooth and continuous
functions, while \citet{lu2021deep} established nearly optimal width--depth
approximation rates using local Taylor expansions. Approximation guarantees
under Sobolev and Besov regularity have been developed by
\citet{guhring2020error} and \citet{siegel2023optimal}. Complementary results
have characterized the roles of sparse connectivity, parameter encoding, and
network depth \citep{bolcskei2019optimal,elbrachter2021deep,
yarotsky2020phase}. For structured target functions,
\citet{petersen2018optimal} obtained optimal rates for piecewise smooth
functions, \citet{shen2019nonlinear} studied approximation through
compositional structures, and \citet{shen2022deep} characterized approximation
rates in terms of network width and depth. We refer to
\citet{devore2021neural} for a comprehensive review of neural network
approximation theory. 

However, existing theory does not explicitly characterize how the decay rates in the input scores and the directional sensitivities of the target functional jointly affect network approximation. While \citet{song2023approximation} and \citet{shi2025nonlinear} study neural-network approximation of functional operators, they do not explicitly characterize how coordinate-wise decay of function scores and anisotropic directional sensitivities jointly affect approximation. Related coordinate-decay phenomena have been studied extensively in functional linear models; see, for example, \citet{cai2006prediction} and \citet{hall2007methodology}. Those studies, however, primarily concern linear-functional estimation and do not address the approximation of nonlinear functionals by ReLU networks.  
Under coordinate decay, higher-index coordinates typically have smaller score magnitudes, and the target functional may also be less sensitive to perturbations in these directions. 
Consequently, only finitely many coordinates need to be retained to achieve a prescribed approximation accuracy, while the discarded coordinates contribute a truncation tail. 

Learning from functional data poses a distinctive capacity-allocation problem, as any implementable neural predictor must compress an intrinsically infinite-dimensional input into a finite representation. Retaining too few coordinates causes information loss, whereas retaining too many requires a fixed network size budget to be distributed across an increasingly high-dimensional input. We formalize this representation--capacity trade-off for general scalar-valued functionals. Working directly with the basis coefficients of the functional input, we quantify the relevance of each coordinate through its magnitude and the directional Fr\'echet derivatives of the target functional. 
We establish non-asymptotic upper bounds and complementary lower bounds for ReLU networks, thereby characterizing the best attainable representation error under a finite computational budget. These results provide theoretically justified scaling rules for selecting the input truncation dimension jointly with network width and depth. We further quantify the diminishing importance of high-order coordinates through a generalized exponential decay profile, a flexible and natural family that explicitly links the decay parameter to the approximation error rate. 

The remainder of the paper is organized as follows. Section \ref{prelimi} introduces the problem formulation, network construction, and assumptions on coordinate-wise score decay and anisotropic directional sensitivities. Section \ref{appro} develops the approximation error analysis for the proposed FNN class. Section \ref{sim} presents the numerical experiments, and Section \ref{conclusion} concludes the paper. Technical proofs are provided in the appendix. 

\section{Preliminaries}
\label{prelimi}
\subsection{Basis Representation and Reparametrization}
Let \(\mathcal H\) be a separable Hilbert space, and assume that \(X(t)\in\mathcal H\) and \(f_0\colon\mathcal H\to\mathbb R\). Let \(\{\nu_d(t)\}_{d\ge 1}\) be a fixed orthonormal basis of \(\mathcal H\), and write
\(
X(t)=\sum_{d\ge 1}\xi_d \nu_d(t),
\)
where \(\bm x=(\xi_1,\xi_2,\ldots)\) denotes the corresponding score sequence. Define the coordinate map
\(
\mathcal T\colon\mathcal H\to \ell^2,
\
\mathcal T(X)=\bm{x}.
\)
Since \(\mathcal T\) is one-to-one, the functional \(f_0\) admits a sequence-space representation, and there exists a function \(g_0\colon\ell^2\to\mathbb R\) such that
\(
f_0=g_0\circ \mathcal T.
\)
Define \(P_D\colon\mathcal H\to \mathrm{span}\{\nu_1,\ldots,\nu_D\}\subset \mathcal H\) be the orthogonal projection operator, say 
\(
P_DX=\sum_{d=1}^D \xi_d \nu_d,
\)
where \(D\in\mathbb N_+\).
To evaluate the directional sensitivity of \(f_0\) along different basis directions, define
\begin{equation}
\label{w}
w_d\coloneq
\max_{1\le s\le m}
\left\{
\sup_{X\in\mathcal H}
\bigl|
\mathcal D^s f_0(X)[\nu_d,\ldots,\nu_d]
\bigr|
\right\}^{1/s},
\end{equation}
where \(\mathcal D^s f_0(X)\) denotes the \(s\)-th Fr\'echet derivative of \(f_0\) at \(X(t)\).
Clearly, large \(w_d\) indicates that \(f_0\) varies more rapidly along the \(d\)-th basis direction, while small \(w_d\) indicates weaker sensitivity.

\begin{remark}
The power \(1/s\) in the definition \eqref{w}
normalizes derivatives of different orders. To see this, consider a perturbation along the \(d\)-th basis direction, say \(X+h_d\nu_d\). The Taylor expansion of \(f_0\) gives terms of the form
\(
D^s f_0(X)[\nu_d,\ldots,\nu_d] h_d^s .
\)
By the definition of \(w_d\),
\[
\left|D^s f_0(X)[\nu_d,\ldots,\nu_d]\right| |h_d|^s
\le(w_d|h_d|)^s.
\]
Thus \(w_d\) is the basic sensitivity scale paid each time the \(d\)-th direction appears in a Taylor expansion component. If \(|h_d|\le s_d\), the effective contribution of the \(d\)-th coordinate is therefore measured by \(w_ds_d\). This explains why the joint decay of the directional derivative scale \(w_d\) and the score envelope \(s_d\) governs the approximation error.
\end{remark}
We now introduce the following assumption.

\begin{assumption}
\label{ass1}
Assume that \(f_0\colon\mathcal H\to\mathbb R\) is \(m\)-times
continuously Fr\'echet differentiable on \(\mathcal H\). 
\end{assumption}

For \(s=1,\ldots,m\) and \(X(t)\in\mathcal H\), let \(\mathcal D^s f_0(X)[h_1,\ldots,h_s]\) denote the \(s\)-th Fr\'echet derivative of \(f_0\) at \(X(t)\) applied to the directions \(h_1,\ldots,h_s\). Let \(\Omega\subset \ell^2\) denote the domain of the score vector \(\bm x=(\xi_1,\xi_2\ldots)\), and let \(\mathcal C^m(\Omega)\) denote the class of functions on \(\Omega\) with continuous partial derivatives up to order \(m\). To relate the Fr\'echet derivatives of \(f_0\) on \(\mathcal H\) to the coordinate derivatives of its score representation, we introduce the following proposition.

\begin{proposition}
\label{prop1}
Let \(\mathcal L\colon\mathcal H\to \ell^2\) be a bounded linear operator, and let \(g\colon\ell^2\to\mathbb R\) be \(m\)-times Fr\'echet differentiable on an open set containing \(\mathcal LX\) for any \(X(t)\in\mathcal H\). Define
\(
f=g\circ \mathcal L,
\)
then \(f\) is \(m\)-times Fr\'echet differentiable. Then for \(s=1,\ldots,m\), and \(h_1,\ldots,h_s\in\mathcal H\),
\begin{equation*}
\label{eq:frechet-chain-rule}
\mathcal D^s f(X)[h_1,\ldots,h_s]=\mathcal D^s g(LX)[Lh_1,\ldots,Lh_s].
\end{equation*}
In particular, for $D\in Z_+$, if \(\mathcal L=\mathcal T\circ P_D\), it follows that
\(
\mathcal LX=(\xi_1,\ldots,\xi_D,0,0,\ldots)^\top, 
\)
and
\begin{equation*}
\mathcal D^s f(X)[h_1,\ldots,h_s]
=
\sum_{d_1,\ldots,d_s=1}^D
\frac{\partial^s g}{\partial \xi_{d_1}\cdots\partial \xi_{d_s}}(\mathcal LX)\times
\prod_{j=1}^s \langle h_j,\nu_{d_j}\rangle.
\end{equation*}
Consequently, taking \(h_j=\nu_{d_j'}\) for \(j=1,\ldots,s\), where \(d_j'\in\{1,\ldots,D\}\), yields
\begin{equation*}
\label{eq:directional-partial-equivalence}
\mathcal D^s f(X)[\nu_{d_1'},\ldots,\nu_{d_s'}]
=\frac{\partial^s g}{\partial \xi_{d_1'}\cdots\partial \xi_{d_s'}}(\mathcal LX).
\end{equation*}
\end{proposition}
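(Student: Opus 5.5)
The plan is to prove the general identity first by induction on $s$, using the chain rule for Fréchet derivatives together with the fact that a bounded linear map is its own derivative. After that, the coordinate formula follows by expanding the directions $\mathcal L h_j$ in the canonical basis of $\ell^2$.

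\textbf{General identity.} For $s=1$, the chain rule applies because $\mathcal L$ is Fréchet differentiable with $\mathcal D\mathcal L(X)=\mathcal L$ and $g$ is differentiable at $\mathcal LX$. It gives $\mathcal Df(X)[h]=\mathcal Dg(\mathcal LX)[\mathcal Lh]$; equivalently, $\mathcal Df(X)=\mathcal Dg(\mathcal LX)\circ\mathcal L$. Now assume that for some $s<m$ and all $X$,
\[
\mathcal D^s f(X)[h_1,\ldots,h_s]=\mathcal D^s g(\mathcal LX)[\mathcal Lh_1,\ldots,\mathcal Lh_s].
\]
View this as $\mathcal D^s f=\Phi\circ \mathcal D^s g\circ\mathcal L$. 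Here $\Phi$ sends an $s$-linear form $B$ on $\ell^2$ to the form $B(\mathcal L\cdot,\ldots,\mathcal L\cdot)$ on $\mathcal H$. The map $\Phi$ is bounded and linear, with $\|\Phi\|\le\|\mathcal L\|^s$. Differentiating once more with the chain rule gives
\[
\mathcal D^{s+1}f(X)[h_1,\ldots,h_{s+1}]=\mathcal D^{s+1}g(\mathcal LX)[\mathcal Lh_1,\ldots,\mathcal Lh_{s+1}].
\]
Since $g$ is $m$ times differentiable on an open set containing the range of $\mathcal L$, this also shows that $f$ is $m$ times Fréchet differentiable.

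\textbf{Coordinate formula.} Take $\mathcal L=\mathcal T\circ P_D$, which is bounded with norm at most $1$, and note $\mathcal LX=(\xi_1,\ldots,\xi_D,0,\ldots)$. Then
\[
\mathcal Lh_j=\sum_{d=1}^D\langle h_j,\nu_d\rangle e_d,
\]
where $e_d$ is the $d$-th canonical unit vector of $\ell^2$. This is a finite sum, so multilinearity of $\mathcal D^s g(\mathcal LX)$ lets us expand $\mathcal D^s g(\mathcal LX)[\mathcal Lh_1,\ldots,\mathcal Lh_s]$ as
\[
\sum_{d_1,\ldots,d_s=1}^D \mathcal D^s g(\mathcal LX)[e_{d_1},\ldots,e_{d_s}]\prod_{j=1}^s\langle h_j,\nu_{d_j}\rangle .
\]

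The remaining point, and the only one needing care, is to identify $\mathcal D^s g(\mathcal LX)[e_{d_1},\ldots,e_{d_s}]$ with the partial derivative $\partial^s g/\partial\xi_{d_1}\cdots\partial\xi_{d_s}(\mathcal LX)$. I would argue by induction on $s$. Fréchet differentiability implies Gateaux differentiability, so
\[
\partial_{\xi_d}g(y)=\lim_{t\to0}\frac{g(y+te_d)-g(y)}{t}=\mathcal Dg(y)[e_d].
\]
For the inductive step, apply the same reasoning to the map $y\mapsto \mathcal D^{s-1}g(y)[e_{d_2},\ldots,e_{d_s}]$. This map is Fréchet differentiable because evaluating a multilinear form at fixed vectors is a bounded linear operation. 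Its derivative in direction $e_{d_1}$ is $\mathcal D^s g(y)[e_{d_1},\ldots,e_{d_s}]$.

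The order of the indices does not matter, because higher Fréchet derivatives are symmetric when they exist. If continuity of the derivatives is needed for that symmetry, Assumption~\ref{ass1} supplies it for the $f_0$ of interest. The final claim then follows by taking $h_j=\nu_{d_j'}$: orthonormality gives $\langle\nu_{d_j'},\nu_{d}\rangle=\delta_{d d_j'}$, so every term of the sum vanishes except the one with $d_j=d_j'$ for all $j$.
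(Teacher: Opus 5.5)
Your proposal is correct and follows the paper's route: induction on the derivative order, using the chain rule together with $\mathcal D\mathcal L(X)=\mathcal L$ for the bounded linear map $\mathcal L$. The differences are refinements rather than a new approach. Writing $\mathcal D^s f=\Phi\circ\mathcal D^s g\circ\mathcal L$ with the bounded pullback $\Phi$ gives Fr\'echet differentiability of $\mathcal D^s f$ directly, whereas the paper's fixed-direction difference quotient only identifies the derivative along $h_m$ and tacitly assumes $\mathcal D^m f$ exists. You also supply the canonical-basis expansion and the identification $\mathcal D^s g(y)[e_{d_1},\ldots,e_{d_s}]=\partial^s g/\partial\xi_{d_1}\cdots\partial\xi_{d_s}(y)$, which the paper's proof leaves implicit.
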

Proposition \ref{prop1} shows that directional Fr\'echet derivatives of \(f_0\) along the basis \(\{\nu_d\}\) coincide with coordinatewise partial derivatives of \(g_0\) in the corresponding score representation. 
This representation is the key device that allows us to transfer directional sensitivity of  functional into a form amenable to network approximation.

\subsection{Network Construction}
\label{s2.3}
A major practical challenge is that a functional input \(X(t)\in \mathcal H\) is infinite-dimensional, whereas any implementable neural network must take a finite-dimensional vector as input. To bridge this gap, we employ truncation. For a truncation level \(D\in\mathbb N_+\), we approximate \(X(t)\) by its projection
\(
P_DX(t)=\sum_{d=1}^D \xi_d \nu_d(t),
\)
and use the truncated score vector
\(
\bm{x}^{(D)}\coloneq(\xi_1,\ldots,\xi_D)
\)
as the network input. 
We consider functions
\(
f_\phi\colon\mathbb R^{D}\to\mathbb R,
\)
parameterized by $\phi$, belonging to a class $\mathcal F_{M,W,S}$ of feedforward neural networks with depth $M$, width $W$, and size $S$. 
We adopt the standard multilayer perceptron (MLP) architecture, where $f_\phi$ is expressed as a composition of affine transformations and nonlinear activation functions
\(
f_\phi(\bm{x}^{(D)})=\mathcal L_M \circ \rho \circ \mathcal L_{M-1}
\circ \cdots \circ \rho\circ \mathcal L_1 \circ \rho \circ \mathcal L_0(\bm{x}^{(D)}).
\)
where $\rho(x)=\max\{0,x\}$ is the ReLU activation function applied componentwise, and each affine map is of the form
\(
\mathcal L_i(x)=W_i x + b_i,\ i=0,1,\ldots,M.
\)
Here $W_i\in\mathbb R^{p_{i+1}\times p_i}$, $x\in\mathbb R^{p_i}$ and $b_i\in\mathbb R^{p_{i+1}}$, with $(p_0,p_1,\ldots,p_M,p_{M+1})$ denoting the widths of each layer, where $p_0=D$ and $p_{M+1}=1$. 

In the sequel, let $a_n\asymp b_n$ denote $C^{-1}b_n<a_n\le Cb_n$ for some $C>1$, $a_n\lesssim b_n$ denote $a_n\le Cb_n$ for some $C>0$, and $a_n\gtrsim b_n$ denote $a_n\ge Cb_n$ for some $C>0$, where ``$C$" denotes a generic positive constant.

\section{Error Analysis}
\label{appro}
\subsection{Coordinatewise decay}
\label{subsec:smoothness-coordinate-decay}

To obtain error bounds, we need to quantify how much each basis coordinate contributes to the  functional. This contribution is determined by two quantities: the size of the corresponding score and the sensitivity of the  functional along that basis direction. Coordinates with small scores
or weak directional sensitivity can be truncated with little loss, whereas coordinates with large values of both quantities must be retained by the network. The next assumption formalizes this joint coordinate-wise control through the score envelopes \(\{s_d\colon d\ge1\}\) and the directional sensitivity weights \(\{w_d\colon d\ge1\}\).

\begin{assumption}
\label{ass2}
There exists a constant \(C_0\ge 1\) and non-increasing sequence $\{w_d\}, \{s_d\}$ such that, for every finite multi-index \(\bm{\alpha}=(\alpha_1,\alpha_2,\ldots)\) with
\(1\le \|\bm{\alpha}\|_1\le m\), 
\(
\sup_{\bm{x}\in\Omega}
\left|
\partial^{\bm{\alpha}} g_0(x)
\right|
\le C_0\prod_{d\ge1} w_d^{\alpha_d}.
\)
In addition, 
\(|\xi_d|\le s_d,\ d\ge1.\)
\end{assumption}
As $w_d\to0$, the  functional becomes progressively less sensitive to higher-index basis directions. 
The next proposition shows that the directional-derivative decay condition in Assumption~\ref{ass2} is implied by product-type decay of higher-order interaction coefficients, thereby providing a concrete structural justification for this assumption. Assumption~\ref{ass2} holds for many regression function in statistical models, such as functional single-index model \citet{chen2011single}, functional additive model \citet{muller2008functional}, and functional quadratic regression model \citet{yao2010functional}.

\begin{proposition}
\label{prop:product-decay-interaction}
Let \(p\ge1\) be fixed and consider the function of the following form
\begin{align*}
f_0(X)&=G\left(Q_p(X)\right),
\qquad \xi_d=\langle X,\nu_d\rangle,\\
Q_p(X)&=
\sum_{1\le j_1\le\cdots\le j_p}
\eta_{j_1,\ldots,j_p}
\prod_{\ell=1}^p \xi_{j_\ell}.
\end{align*}
Assume that \(G\) has bounded derivatives up to order \(m\), that
\(\sum_{d\ge1}w_ds_d<\infty\), and that the interaction coefficients satisfy
the product-type decay condition
\begin{equation}
\label{prodecay}
|\eta_{j_1,\ldots,j_p}|
\lesssim
\prod_{\ell=1}^p w_{j_\ell},
\qquad
1\le j_1\le\cdots\le j_p .
\end{equation}
Then, for every \(1\le s\le m\) and every \(d_1,\ldots,d_s\ge1\),
\[
\left|
\mathcal D^s f_0(X)[\nu_{d_1},\ldots,\nu_{d_s}]
\right|
\lesssim
\prod_{\ell=1}^s w_{d_\ell}.
\]
\end{proposition}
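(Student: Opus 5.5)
The plan is to apply the multivariate chain rule (Faà di Bruno) to the composition $f_0=G\circ Q_p$, and then bound every derivative of the polynomial $Q_p$ along basis directions by a product of weights.

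\textbf{Step 1: Fa\`a di Bruno.} For directions $\nu_{d_1},\ldots,\nu_{d_s}$ I will write
\[
\mathcal D^s f_0(X)[\nu_{d_1},\ldots,\nu_{d_s}]=\sum_{\pi}G^{(|\pi|)}\bigl(Q_p(X)\bigr)\prod_{B\in\pi}\mathcal D^{|B|}Q_p(X)[\nu_{d_b}\colon b\in B].
\]
The sum runs over all partitions $\pi$ of $\{1,\ldots,s\}$. Since $|\pi|\le s\le m$, the factor $|G^{(|\pi|)}|$ is bounded by a constant. The number of partitions depends only on $s\le m$. So it suffices to show that for every block $B$,
\[
\bigl|\mathcal D^{|B|}Q_p(X)[\nu_{d_b}\colon b\in B]\bigr|\lesssim\prod_{b\in B}w_{d_b}.
\]
The product over blocks then gives exactly $\prod_{\ell=1}^s w_{d_\ell}$, because the blocks partition the index set.

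\textbf{Step 2: derivatives of $Q_p$.} By Proposition~\ref{prop1} with $\mathcal L=\mathcal T$, the directional derivatives of $Q_p$ are partial derivatives in the scores. Fix $r=|B|\le p$; for $r>p$ the derivative vanishes. Differentiating the monomial $\prod_{\ell}\xi_{j_\ell}$ with respect to $\xi_{d_{b_1}},\ldots,\xi_{d_{b_r}}$ is nonzero only if the multiset $\{d_{b}\}$ is contained in the multiset $\{j_1,\ldots,j_p\}$. In that case the result is a combinatorial constant (at most $p!$) times the product of the remaining $p-r$ scores. Hence
\[
\bigl|\partial^r Q_p\bigr|\lesssim\sum_{k_1,\ldots,k_{p-r}\ge1}|\eta_{\mathrm{sort}(d_B,k)}|\prod_{i=1}^{p-r}|\xi_{k_i}|\lesssim\prod_{b\in B}w_{d_b}\Bigl(\sum_{k\ge1}w_ks_k\Bigr)^{p-r}.
\]
Here I use the decay condition~\eqref{prodecay}, the envelope $|\xi_k|\le s_k$, and the fact that sums over ordered tuples dominate sums over sorted tuples. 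The finiteness $\sum_k w_ks_k<\infty$ gives the claim uniformly in $X$.

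\textbf{Main obstacle.} The delicate point is justifying that $Q_p$ is genuinely $m$-times Fréchet differentiable with these termwise derivatives. The tool is absolute convergence of the series, controlled by the same bound, on the domain where $|\xi_d|\le s_d$. The bound is uniform only over that domain, so the supremum is over $X$ satisfying the envelope condition. I will treat $Q_p$ as a bounded symmetric $p$-linear form evaluated on the diagonal. Its derivatives are then again multilinear forms, computed termwise; this is justified by dominated convergence, using the summable majorant above.
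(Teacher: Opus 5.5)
Your proposal is correct and follows essentially the same proof as the paper, with the two steps presented in the opposite order: the paper also bounds $\mathcal D^{q}Q_p(X)[\nu_{d_1},\ldots,\nu_{d_q}]$ by $\prod_{a=1}^q w_{d_a}\bigl(\sum_{k\ge1}w_ks_k\bigr)^{p-q}$, using the reordered coefficient (your $\eta_{\mathrm{sort}}$) and $|\xi_k|\le s_k$. It then inserts this bound into the Fa\`a di Bruno partition expansion, using boundedness of $G^{(|\pi|)}$ and a partition count depending only on $s\le m$. The differentiability issue you flag is not addressed in the paper at all. Note that a bounded $p$-linear form on all of $\mathcal H$ would need something like $\sum_d w_d^2<\infty$, which is not assumed, so your restriction to the envelope domain $|\xi_d|\le s_d$ is the appropriate setting.
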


The product-type condition \eqref{prodecay} can be viewed as a higher-order analogue of coefficient decay in functional linear regression (see e.g., \citet{cai2006prediction}). Indeed, when \(p=1\), the interaction index reduces to the linear form \(Q_1(X)=\sum_{j\ge1}\eta_j\xi_j\), and the condition becomes
\(|\eta_j|\lesssim w_j\). Thus \(w_j\) plays the role of the coordinate-wise coefficient scale. For \(p\ge2\), the same principle requires interactions involving weak directions to be small, that is, if any participating coordinate has a small sensitivity scale, then the corresponding higher-order interaction coefficient is also small. This is reasonable because an interaction term should not exert a pronounced effect
if it involves a basis direction whose individual contribution is already weak.

\subsection{Approximation framework}
\label{subsec:approximation-dimensional-decay}

The network construction builds on the local Taylor approximation framework (see, e.g., \citet{lu2021deep}), extending it through coordinate truncation and anisotropic partitioning to exploit coordinate-dependent decay. The basic idea is to partition the input domain into small cells, approximate the local Taylor expansion on each cell, fit the Taylor coefficient maps on the resulting discrete grid, and then approximate the monomial terms appearing in the Taylor expansion by product networks. 

For \(D\ge 1\), define the projected functional
\(
f_0^{(D)}\coloneq f_0\circ P_D.
\)
Since \(P_D\) is a bounded linear operator, Proposition~\ref{prop1}
implies that, for \(s=1,\ldots,m\),
\begin{align*}
\mathcal D^s f_0^{(D)}(X)[h_1,\ldots,h_s]=
\mathcal D^s f_0(P_DX)[P_Dh_1,\ldots,P_Dh_s].
\end{align*}
Consequently,
\(
\mathcal D^s f_0^{(D)}(X)
[\nu_{d_1},\ldots,\nu_{d_s}]
=0
\)
whenever at least one of \(d_1,\ldots,d_s\) is greater than \(D\). The Taylor expansion of the projected functional \(f_0^{(D)}\) at \(X_0(t)\) is
\begin{align*}
f_0^{(D)}(X)
&=
f_0^{(D)}(X_0)
+
\sum_{s=1}^{m-1}\frac{1}{s!}
\sum_{d_1,\ldots,d_s=1}^{D}
\mathcal D^sf_0(P_DX_0)
[\nu_{d_1},\ldots,\nu_{d_s}]
\prod_{j=1}^s\xi_{h,d_j}
+R_{m,D}(X,X_0),
\end{align*}
where
\begin{align*}
R_{m,D}(X,X_0)
&=\frac{1}{(m-1)!}
\int_0^1(1-\tau)^{m-1}
\mathcal D^mf_0(P_DX_0+\tau h_D)
[h_D,\ldots,h_D]\,d\tau.
\end{align*}
To approximate the original functional \(f_0\), we use the decomposition
\(
f_0(X)=f_0^{(D)}(X)+\Delta_D(X).
\)
Under Assumption~\ref{ass2}, the projection error satisfies
\begin{align*}
|\Delta_D(X)|
&=
\Bigg|
\int_0^1
\mathcal Df_0\bigl(P_DX+\tau(I-P_D)X\bigr)[(I-P_D)X]\,d\tau
\Bigg| \\
&\le
C_0\sum_{d>D}w_d|\xi_d|
\le
C_0\sum_{d>D}w_ds_d.
\end{align*}
Thus, the approximation of \(f_0\) consists of a finite-dimensional
network approximation error for \(f_0^{(D)}\) and a projection error
arising from the discarded coordinates.
The goal here is to approximate the above Taylor expansion up to order $m-1$ through the following steps:
\begin{itemize}
\item[1.]
For a non-increasing sequence \(\{L_d\colon d\ge1\}\), define \(D_0\coloneq\max\{d\colon L_d\ge1\}\), and let \(D\ge D_0\) denote the dimension of the input scores. Partition \(\Omega_{D_0}\coloneq[-s_1,s_1]\times\cdots\times[-s_{D_0},s_{D_0}]\) along its first \(D_0\) coordinates. Let \(\delta_d\in(0,2s_d/L_d)\), put \(a_{d,i}\coloneq-s_d+2s_di/L_d\), and construct the trifling region
\begin{align*}
\mathcal R(\Omega_{D_0},\bm L,\bm\delta)&=\bigcup_{d=1}^{D_0}\Biggl\{\bm\xi\in\Omega_{D_0}\colon\xi_d\in\bigcup_{i=1}^{L_d-1}\left(a_{d,i}-\delta_d,a_{d,i}\right]\Biggr\}.
\end{align*}
We denote the resulting cells in \(\Omega_{D_0}\setminus\mathcal R(\Omega_{D_0},\bm L,\bm\delta)\) by \(P_{\bm\theta}\), where \(\theta_d\in\{0,1,\ldots,L_d-1\}\). Then construct a neural network \(\tilde\psi=(\psi_1,\ldots,\psi_{D_0})\) that approximately maps each \(\bm x^{(D_0)}\in P_{\bm\theta}\) to
\begin{equation}
\label{cell}
\left(
-s_1+\frac{2s_1\theta_1}{L_1},
\ldots,
-s_{D_0}+\frac{2s_{D_0}\theta_{D_0}}{L_{D_0}}
\right).
\end{equation}
Thus, only the first \(D_0\) coordinates are partitioned, and the number of cells is \(\prod_{d=1}^{D_0}L_d\).

The decrease in \(L_d\) reflects the heterogeneous importance of the coordinate directions. The quantity \(L_d\) is the number of subintervals assigned to the \(d\)-th coordinate. Since the range of this coordinate has length \(2s_d\), its cell width is of order \(s_d/L_d\). Under the directional derivative bound, a perturbation of this magnitude changes the target functional by at most a constant multiple of \(w_ds_d/L_d\). Clearly, coordinates with larger \(w_d\) require finer partitions.

\item[2.]
For each cell \(P_{\bm\theta}^{(D_0)}\), define
\begin{align*}
P_{\bm\theta}^{(D)}
&\coloneq
P_{\bm\theta}^{(D_0)}
\times
\prod_{d=D_0+1}^{D}[-s_d,s_d],\\
X_{\bm\theta}(t)
&\coloneq
\sum_{d=1}^{D_0}
\left(
-s_d+\frac{2s_d\theta_d}{L_d}
\right)\nu_d(t).
\end{align*}
Thus, \(\bm x^{(D)}\in P_{\bm\theta}^{(D)}\) if and only if its first \(D_0\) coordinates \(\bm x^{(D_0)}=(\xi_1,\ldots,\xi_{D_0})\) belong to \(P_{\bm\theta}^{(D_0)}\); the remaining coordinates \((\xi_{D_0+1},\ldots,\xi_{D})\) do not determine the cell index \(\bm\theta\). For \(s=1,\ldots,m-1\) and \(\bm d=(d_1,\ldots,d_s)\in\{1,\ldots,D\}^s\), define the piecewise-constant Taylor coefficient map by
\begin{align*}
\mathcal Q_{\bm d}^{(s)}(\bm x^{(D)})
&\coloneq
\sum_{\bm\theta}
\mathcal D^s f_0^{(D)}(X_{\bm\theta})[\nu_{d_1},\ldots,\nu_{d_s}]
\mathbf 1_{P_{\bm\theta}^{(D_0)}}(\bm x^{(D_0)}).
\end{align*}
for \(\bm x^{(D)}\in
\bigcup_{\bm\theta}P_{\bm\theta}^{(D)}\).
Since \(X_{\bm\theta}(t)\in\operatorname{span}
\{\nu_1,\ldots,\nu_{D_0}\}\) and \(d_1,\ldots,d_s\le D\),
\[
\mathcal D^s f_0^{(D)}(X_{\bm\theta})
[\nu_{d_1},\ldots,\nu_{d_s}]=
\mathcal D^s f_0(X_{\bm\theta})
[\nu_{d_1},\ldots,\nu_{d_s}].
\]
In particular, if
\(\bm x^{(D_0)}\in P_{\bm\theta}^{(D_0)}\), then
\begin{align*}
\mathcal Q_{\bm d}^{(s)}
\left(
\xi_1,\ldots,\xi_{D_0},\xi_{D_0+1},\ldots,\xi_{D}
\right)&=
\mathcal Q_{\bm d}^{(s)}
(
\xi_1,\ldots,\xi_{D_0},
0,\ldots,0
)\\
&=
\mathcal D^s f_0(X_{\bm\theta})
[\nu_{d_1},\ldots,\nu_{d_s}].
\end{align*}
We then construct a network \(\phi_{\bm \alpha}(\bm x^{(D)})\) to approximate \(\mathcal Q_{\bm d}^{(s)}(\bm x^{(D)})\) through point fitting, where $\bm\alpha$ indicates the multi-index and $\|\bm\alpha\|_1=s$. For each fixed \(\bm d\), the coefficient map has only \(\prod_{d=1}^{D_0}L_d\) distinct values, one for each \(D_0\)-dimensional cell \eqref{cell}. 

\item[3.]
For \(\bm x^{(D_0)}\in P_{\bm\theta}\), write
\[
\xi_{h_{\bm\theta},d}
\coloneq
\begin{cases}
\displaystyle
\xi_d+s_d-\frac{2s_d\theta_d}{L_d},
& d\le D_0,\\[6pt]
\xi_d,
& D_0<d\le D.
\end{cases}
\]
For
\(\bm d=(d_1,\ldots,d_s)\in\{1,\ldots,D\}^s\), construct a network
\(P_{\bm \alpha}(\cdot)\) that approximates the monomial
\[
\prod_{j=1}^s\xi_{h_{\bm\theta},d_j}=
\prod_{d=1}^{D_0}
\left\{
\xi_d-\psi_d(\bm x^{(D_0)})
\right\}^{\alpha_d}
\prod_{d=D_0+1}^{D}\xi_d^{\alpha_d}.
\]
In particular, we first construct a network \(\phi_{\times}\) that
approximates the scalar product map \((x,y)\mapsto xy\), and then
compose copies of \(\phi_{\times}\) to approximate the required
monomials.
\end{itemize}



Then the neural network construction can be written as
\begin{align*}
\phi(\bm x^{(D)})
&=
\sum_{\|\bm\alpha\|_1\le m-1}
\phi_{\times}
\Biggl(
\frac{\phi_{\bm\alpha}(\bm x^{(D)})}{\bm\alpha!},
P_{\bm\alpha}
\left(
\bm x^{(D_0)}-\tilde\psi(\bm x^{(D_0)}),\xi_{D_0+1},\ldots,\xi_{D}
\right)
\Biggr),
\end{align*}
where \(\bm\alpha=(\alpha_1,\ldots,\alpha_{D})\),
\(\bm\alpha!\coloneq\prod_{d=1}^{D}\alpha_d!\). 

Fix integers \(D\) define the finite-dimensional slice \(\Omega_{D}\coloneq\prod_{d=1}^{D}[-s_d,s_d]\). Let \(\bm w=(w_1,\ldots,w_{D})\), and let \(\mathcal C^m(\Omega_{D},\bm w)\) be the class of functions with continuous partial derivatives up to order \(m\) satisfying \[\|\partial^{\bm\alpha}g\|_{L^\infty(\Omega_{D})}\le C_0\prod_{d=1}^{D}w_d^{\alpha_d}\] for every multi-index \(\bm\alpha\in\mathbb N_0^{D}\) with \(1\le\|\bm\alpha\|_1\le m\). 
Let
\[
H(D)\coloneq\sum_{d>D}w_ds_d,
\qquad
\Psi(D)\coloneq\sum_{d=1}^D\log\{(w_ds_d)^{-1}\},
\]
then we introduce the following results on approximation error.
\begin{theorem}
\label{upper}
Suppose that Assumption \ref{ass1} and \ref{ass2} hold. 
For $W_\phi,M_\phi\in\mathbb Z_+$, let
$\mathcal A=W_\phi M_\phi$ and
\begin{align*}
D_0 = \max_d\left\{
w_{d}s_{d}\exp\left[
\frac{\log\mathcal A^2+\Psi(d)}{d}
\right]\ge2
\right\}.
\end{align*}
Suppose that \(D\) satisfies \(H(D)\lesssim H(D_0)^m\). Then there
exists a ReLU network class \(\mathcal F\) with input dimension \(D\),
size \(S\), width \(W\), and depth \(M\) satisfying
\(
S\lesssim 3^{D_0}\{D^{m-1}W_\phi^2M_\phi+1\},
\)
\(
W\lesssim 3^{D_0}\{D^{m-1}W_\phi+1\},
\)
and \(M\lesssim M_\phi+2D_0\), such that
\begin{align*}
\sup_{f_0\in\mathcal C^m(\Omega,\bm w)}
\inf_{\phi\in\mathcal F}
\|\phi-f_0\|_{L^\infty(\Omega)}\lesssim
D_0^m
\exp\left[
-\frac{m}{D_0}
\{\log\mathcal A^2+\Psi(D_0)\}
\right].
\end{align*}
\end{theorem}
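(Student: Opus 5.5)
We prove the bound by decomposing the error into a projection part and a network part on the truncated domain, following the three-step construction of Section~\ref{subsec:approximation-dimensional-decay}:
\[
|f_0(X)-\phi(\bm x^{(D)})|\le |\Delta_D(X)|+|f_0^{(D)}(X)-\phi(\bm x^{(D)})|.
\]
The projection error is already controlled by $|\Delta_D(X)|\le C_0H(D)$. The hypothesis $H(D)\lesssim H(D_0)^m$ is designed so that this term is dominated by the network error, provided we show that $H(D_0)^m$ is of the order of the target rate. For the second term we choose the anisotropic partition
\[
L_d\asymp \frac{w_ds_d}{\varepsilon},\qquad d\le D_0,
\]
where $\varepsilon$ is a common per-coordinate resolution. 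With this choice every coordinate contributes the same scaled cell width, $w_d\cdot s_d/L_d\asymp\varepsilon$.

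The number of cells is $N=\prod_{d\le D_0}L_d$. We take $N\asymp\mathcal A^2$, matching the point-fitting capacity of a network of width $W_\phi$ and depth $M_\phi$. Solving $\prod_{d\le D_0}(w_ds_d/\varepsilon)=\mathcal A^2$ gives
\[
\varepsilon=\exp\bigl[-\{\log\mathcal A^2+\Psi(D_0)\}/D_0\bigr].
\]
The definition of $D_0$ is exactly the condition $L_{D_0}\ge2$, so every retained coordinate is genuinely partitioned. For $d>D_0$ we have $w_ds_d<2\varepsilon$; monotonicity then yields $H(D_0)\lesssim\sum_{d>D_0}w_ds_d$, and we expect this tail to be comparable to $D_0\varepsilon$ under the decay profiles of interest. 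This comparison has to be checked, or absorbed through the hypothesis on $D$.

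On a cell $P_{\bm\theta}$, Taylor's formula for $f_0^{(D)}$ at $X_{\bm\theta}$, together with Proposition~\ref{prop1} and Assumption~\ref{ass2}, bounds the remainder:
\[
|R_{m,D}|\le \frac{C_0}{m!}\Bigl(\sum_{d\le D}w_d|\xi_{h,d}|\Bigr)^m\lesssim\Bigl(D_0\varepsilon+\sum_{D_0<d\le D}w_ds_d\Bigr)^m\lesssim D_0^m\varepsilon^m.
\]
This is the claimed rate. Three further errors must be bounded by the same quantity, and we build the network pieces to meet this accuracy.
\begin{enumerate}
\item The step maps $\psi_d$ reproduce the grid points (\ref{cell}) exactly off the trifling region, using standard ReLU step-function constructions.
\item The networks $\phi_{\bm\alpha}$ fit the $N$ values of each coefficient map $\mathcal Q^{(s)}_{\bm d}$ by point-fitting. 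We invoke the bit-extraction/point-fitting construction of \citet{lu2021deep}, which gives accuracy $\mathcal A^{-2m}$ with width $W_\phi$ and depth $M_\phi$. The required precision is $\varepsilon^{m}$, and this is attainable because $\varepsilon^m\ge\mathcal A^{-2m}\cdot(\text{poly})$.
\item The product networks $\phi_\times$ and $P_{\bm\alpha}$ achieve error $\mathcal A^{-2m}$ via Yarotsky-type squaring.
\end{enumerate}
Summing over the at most $D^{m-1}$ multi-indices, each scaled by the derivative bound $C_0\prod w_d^{\alpha_d}$, gives the $D^{m-1}$ factor in width and size but no loss in accuracy. The reason is that each monomial is bounded by $\prod_d (w_d|\xi_{h,d}|)^{\alpha_d}$ after rescaling, with coordinates normalised by $w_d$.

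The trifling region is handled by the standard device of \citet{lu2021deep}. We shift the partition in each of the $D_0$ coordinates, producing $3^{D_0}$ copies of the network, and combine them with the middle-value (median) operator applied coordinate by coordinate. This produces the factor $3^{D_0}$ in $S$ and $W$ and the additive $2D_0$ in depth, and it extends the uniform bound to all of $\Omega_{D_0}$.

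The main obstacle is the bookkeeping that makes the anisotropic point-fitting consistent. The $L_d$ must be integers with $L_d\ge1$, the index $\bm\theta$ must be encoded into a single integer of size $N\lesssim\mathcal A^2$ so that the Lu et al.\ fitting lemma applies, and the rounding losses $\lceil\cdot\rceil$ accumulate only as the constant factor $2^{D_0}$. This factor is absorbed into $\varepsilon$ through the threshold ``$\ge2$'' in the definition of $D_0$. I would try first to write $\bm\theta\mapsto\sum_d\theta_d\prod_{d'<d}L_{d'}$ as a linear layer after the step maps, so that the one-dimensional fitting result can be reused verbatim.
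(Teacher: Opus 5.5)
Your plan follows the paper's construction step for step. It uses the allocation $L_d\asymp w_ds_d/\varepsilon$ (the paper obtains this as $\ell_d=w_d$ via AM--GM), the cell count $\prod_{d\le D_0}L_d=\mathcal A^2$, radix encoding plus the point-fitting lemma of \citet{lu2021deep}, the remainder bound $\frac{C_0}{m!}(\sum_{d\le D}w_d|\xi_{h,d}|)^m$, and median extension giving $3^{D_0}$ and $2D_0$. However, it has a genuine gap at the step you flag and then use anyway, namely $D_0\varepsilon+\sum_{D_0<d\le D}w_ds_d\lesssim D_0\varepsilon$.

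The coordinates $D_0<d\le D$ are not partitioned, so they enter the remainder with $|\xi_{h,d}|=|\xi_d|\le s_d$. Your bound is therefore really $\{D_0\varepsilon+H(D_0)-H(D)\}^m+C_0H(D)$. The hypothesis $H(D)\lesssim H(D_0)^m$ cannot absorb this middle band, because for every choice of $D$ the expression is $\gtrsim H(D_0)^m$. Indeed, if $H(D)\le H(D_0)/2$ the first term is at least $\{H(D_0)/2\}^m$; otherwise the second exceeds $C_0H(D_0)/2\ge C_0H(D_0)^m/2$ when $H(D_0)\le1$. So reaching $D_0^m\varepsilon^m$ requires precisely $H(D_0)\lesssim D_0\varepsilon$, and nothing in your argument provides this. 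Maximality of $D_0$ controls only a single term: it gives $w_{D_0+1}s_{D_0+1}<2^{1+1/D_0}\varepsilon\le4\varepsilon$. It does not give $w_ds_d<2\varepsilon$ for all $d>D_0$, since $\Psi(d)/d$ is re-evaluated at each $d$. A termwise bound also says nothing about the infinite sum $H(D_0)$, and the sentence ``monotonicity yields $H(D_0)\lesssim\sum_{d>D_0}w_ds_d$'' is a tautology.

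The paper does not estimate this tail. It closes the step by a balancing choice, $L^{-1}\sum_{d\le D_0}w_d\ell_d^{-1}=H(D_0)$, i.e.\ $H(D_0)=D_0/L=D_0\varepsilon$ once $\ell_d=w_d$. The remainder is then $\lesssim H(D_0)^m=D_0^m\varepsilon^m$, and the hypothesis on $D$ places $C_0H(D)$ at the same order. Note that this balance, combined with $\mathcal A^2=L^{D_0}e^{-\Psi(D_0)}$, fixes $D_0$ through $\log\mathcal A^2=D_0\log\{D_0/H(D_0)\}-\Psi(D_0)$ rather than through the threshold in the statement.

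If you want to keep the threshold definition, a clean repair is to combine $w_{D_0+1}s_{D_0+1}<4\varepsilon$ with a tail-regularity condition such as $H(d)\lesssim d\,w_{d+1}s_{d+1}$. This holds under $w_ds_d\asymp e^{-cd^\rho}$, where $H(d)\lesssim d^{\max(1-\rho,0)}w_{d+1}s_{d+1}$. It fails for slowly decaying summable profiles such as $w_ds_d\asymp1/\{(d+1)\log^2(d+1)\}$, where $H(D_0)/(D_0\varepsilon)\asymp\log D_0$.

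Separately, no $2^{D_0}$ rounding loss arises. Flooring only lowers the cell count below $\mathcal A^2$, and since $L_d\ge2$ it enlarges each cell width by at most a factor $3/2$, which enters the remainder additively.
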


Theorem \ref{upper} gives the approximation bound
\[
        D_0^m
        \exp\left[
        -\frac{m}{D_0}\{\log\mathcal A^2+\Psi(D_0)\}
        \right].
\]
We refer to \(D_0\) as the {\it effective dimension}, as it quantifies the number of input coordinates that can be retained in the network approximation under a given network size budget.
Ignoring the polynomial prefactor \(D_0^m\), the leading exponential
term in Theorem~\ref{upper} can be written as
\(\exp\{-m\varphi(\log\mathcal A^2)\}\), where
\(\varphi(t)=\{t+\Psi(D_0(\mathcal A))\}/D_0(\mathcal A)\).
To determine whether the resulting approximation rate can be
polynomial in the network size budget, it is useful to compare
\(\varphi(t)\) with \(t=\log\mathcal A^2\). By definition,
\(\varphi(t)/t
=1/D_0(\mathcal A)
+\Psi(D_0(\mathcal A))/\{tD_0(\mathcal A)\}\).
Thus, when the effective dimension \(D_0(\mathcal A)\) diverges and
the cumulative coordinate-decay term \(\Psi(D_0(\mathcal A))\)
grows more slowly than \(tD_0(\mathcal A)\), the exponent satisfies
\(\varphi(t)=o(t)\) and then the approximation error decay in a subpolynomial rate. 
Intuitively, as more coordinates must be
retained, the available network size budget is distributed over an expanding input space, preventing the leading exponent from being of order \(t\) and the polynomial rate cannot be achieved. 


The following proposition gives the lower bound of the approximation error. 

\begin{theorem}
\label{lower}
Let \(J\geq1\), and let \(\mathcal F\) be any approximation class with pseudo-dimension \(V=\operatorname{PDim}(\mathcal F)\). Suppose that \(\mathcal F\) uniformly approximates \(\mathcal C^m(\Omega,\bm w)\) with error \(\varepsilon\), that is,
\[
\sup_{f\in\mathcal C^m(\Omega,\bm w)}\inf_{\phi\in\mathcal F}\|\phi-f\|_{L^\infty(\Omega)}=\varepsilon.
\]
For any $J$ such that \(\left(\frac{C_0}{2\varepsilon}\right)^{1/m}w_Js_J\geq1,\) we have
\[
\varepsilon\gtrsim\exp\left[-\frac{m}{J}\{\log V+\Psi(J)\}\right].
\]
\end{theorem}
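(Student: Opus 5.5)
We use the standard bump-function packing argument in the style of \citet{yarotsky2017error} and \citet{lu2021deep}, made anisotropic. The idea is to build many functions in $\mathcal C^m(\Omega,\bm w)$ from bumps on an anisotropic grid over the first $J$ coordinates, show that $\mathcal F$ must shatter the grid points, and compare the grid size with $V$.

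\textbf{Bump construction.} Fix $J$ as in the statement and an integer scale $N\ge1$, to be chosen later. Partition the first $J$ coordinates by splitting $[-s_d,s_d]$ into $N_d$ subintervals, where $N_d$ is the largest integer with $N_d\le N w_d s_d$ (up to a constant). The condition $(C_0/2\varepsilon)^{1/m}w_Js_J\ge1$ is what lets us take $N\asymp (C_0/\varepsilon)^{1/m}$ and still have $N_d\ge1$ for every $d\le J$; monotonicity of $w_ds_d$ gives $N_d\ge N_J$. Let $\eta$ be a fixed smooth bump on $[-1,1]^J$ with $\eta(0)=1$. For each cell centre $\bm c_{\bm i}$ and sign vector $\bm\epsilon\in\{\pm1\}^{K}$, with $K=\prod_{d\le J}N_d$ the number of cells, define
\[
f_{\bm\epsilon}(\bm x)=\frac{c_\eta C_0}{N^m}\sum_{\bm i}\epsilon_{\bm i}\,\eta\Bigl(\bigl(N_d(\xi_d-c_{\bm i,d})/s_d\bigr)_{d\le J}\Bigr).
\]
This function depends only on $\xi_1,\dots,\xi_J$. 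By the chain rule, $\partial^{\bm\alpha}f_{\bm\epsilon}$ is bounded by $c_\eta C_0 N^{-m}\prod_d (N_d/s_d)^{\alpha_d}$. Since $N_d/s_d\le N w_d$ and $\|\bm\alpha\|_1\le m$, this is at most $c_\eta C_0 N^{\|\bm\alpha\|_1-m}\prod_d w_d^{\alpha_d}\le C_0\prod_d w_d^{\alpha_d}$ once $c_\eta$ is small (we use $N\ge1$). Hence every $f_{\bm\epsilon}$ lies in $\mathcal C^m(\Omega,\bm w)$, and each has amplitude $c_\eta C_0 N^{-m}$ at the cell centres.

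\textbf{Shattering.} Choose $N$ so that $c_\eta C_0N^{-m}>2\varepsilon$, that is, $N\asymp(C_0/\varepsilon)^{1/m}$. Pick $\phi_{\bm\epsilon}\in\mathcal F$ within $\varepsilon$ of $f_{\bm\epsilon}$ in sup norm; strictly, within $\varepsilon'$ slightly larger, which only affects constants. Then $\operatorname{sign}\phi_{\bm\epsilon}(\bm c_{\bm i})=\epsilon_{\bm i}$ with threshold $0$, so $\mathcal F$ pseudo-shatters the $K$ centres (padded with zeros in the coordinates beyond $J$). Therefore $V\ge K$.

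\textbf{Counting.} Since $N_d\gtrsim N w_ds_d$ (using $N_d\ge1$ and $\lfloor y\rfloor\ge y/2$ for $y\ge1$),
\[
\log V\ge\sum_{d=1}^J\log N_d\ge J\log N-\Psi(J)-J\log 2 .
\]
Thus $\log N\le \{\log V+\Psi(J)\}/J+\log 2$. Substituting $N\asymp\varepsilon^{-1/m}$ gives $\varepsilon\gtrsim N^{-m}\gtrsim\exp[-\tfrac{m}{J}\{\log V+\Psi(J)\}]$, with the factor $2^m$ absorbed into the constant.

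\textbf{Main obstacle.} The delicate step is the integer rounding. We need $N_d\ge1$ for all $d\le J$, which is exactly the hypothesis on $J$, and we need $N_d\gtrsim Nw_ds_d$ uniformly, so that the loss is only a $J\log 2$ term that is $O(1)$ after dividing by $J$. We also need to check that the derivative bound uses $N_d/s_d\le Nw_d$ rather than the reverse inequality. A secondary issue is that the supremum defining $\varepsilon$ may not be attained; we handle this by working with $\varepsilon'=(1+o(1))\varepsilon$.
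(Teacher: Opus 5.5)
Your argument follows the paper's proof: anisotropic bumps on the first $J$ coordinates with $N_d\approx Nw_ds_d$ cells, amplitude of order $C_0N^{-m}$ above $\varepsilon$, pseudo-shattering of the zero-padded centres at threshold $0$, and $\log V\ge\sum_d\log N_d$. This is the paper's allocation $\ell_d=w_d$; the paper reaches it by a two-case optimization, which you rightly skip because a lower bound needs only one admissible packing. You are more careful than the paper about the floors and the bump constant, and that care exposes the one step that fails as written.

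You claim that the hypothesis $(C_0/2\varepsilon)^{1/m}w_Js_J\ge1$ guarantees $N_J\ge1$. It does not.
\begin{itemize}
\item Your $N$ satisfies $N^m<c_\eta C_0/(2\varepsilon)$. As your derivative estimate is set up, you need $c_\eta\|\partial^{\bm\alpha}\eta\|_\infty\le1$ for all $1\le\|\bm\alpha\|_1\le m$, in particular $c_\eta\|\partial_1\eta\|_\infty\le1$.
\item But $\|\partial_1\eta\|_\infty>1$: along the $\bm e_1$-axis, $\eta$ falls from $1$ to $0$ over length $1$, while $\partial_1\eta$ vanishes on the boundary of $[-1,1]^J$. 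Hence $c_\eta<1$.
\item In the boundary case $(C_0/2\varepsilon)^{1/m}w_Js_J=1$ you then get $Nw_Js_J<c_\eta^{1/m}<1$. So $N_J=0$ and the grid is empty.
\end{itemize}
The paper escapes this only through its normalization $\varphi(0)=1$, small support and $\max_{\|\alpha\|_1\le m}\|\partial^\alpha\varphi\|_\infty\le1$. That normalization is impossible for the same reason, so its threshold on $J$ is meaningful only up to a constant.

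The repair is short. Let $J'\le J$ be the largest index with $Nw_{J'}s_{J'}\ge1$; by monotonicity this is an initial segment. Place bumps only in $\xi_1,\ldots,\xi_{J'}$ and let $f_{\bm\epsilon}$ ignore the remaining coordinates. The derivative bounds are unchanged. Since $\log(Nw_ds_d/2)<0$ for $J'<d\le J$, you still get $\log V\ge\sum_{d\le J'}\log(Nw_ds_d/2)\ge J\log N-\Psi(J)-J\log2$. If $J'=0$, the two constants $\pm c_\eta C_0N^{-m}$ already give $V\ge1$, which suffices. With this repair the hypothesis on $J$ plays no role.

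What you still use tacitly is $N\ge1$, in the step $N^{\|\bm\alpha\|_1-m}\le1$. This holds automatically when $J'\ge1$ provided $w_1s_1\le1$. The paper relies on the same assumption ($L\ge1$) when it reduces its derivative constraint to $B_L\le C_0L^{-m}$.
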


Under generalized exponential coordinate decay \(w_ds_d\asymp\exp(-cd^\rho)\), applying Theorem~\ref{lower} with \(J=D_0\), together with the pseudo-dimension bound \(\operatorname{PDim}(\mathcal F)\lesssim SM\log S\) \citep{bartlett2019nearly}, shows that the lower bound has the same leading exponential term as Theorem~\ref{upper}. The discrepancy consists only of lower-order multiplicative factors, which is negligible relative to the common leading exponent. Consequently, the constructive upper bound is nearly optimal on the leading exponential scale. The detailed comparison of the two bounds is provided below.

To compare the lower bound with the constructive upper bound, apply
Theorem~\ref{lower} with \(J=D_0\) to the restriction of the constructed
network class to the \(D_0\)-dimensional slice obtained by setting
\(\xi_{D_0+1}=\cdots=\xi_D=0\). Restricting the input domain cannot
increase the pseudo-dimension. Therefore, Theorem~\ref{lower} gives
\(\varepsilon_{\mathrm{low}}\gtrsim
\exp[-m\{\log V+\Psi(D_0)\}/D_0]\), provided that
\(\{C_0/(2\varepsilon)\}^{1/m}w_{D_0}s_{D_0}\geq1\). If this
admissibility condition fails, then
\(\varepsilon>(C_0/2)(w_{D_0}s_{D_0})^m\). By the definition of
\(D_0\), we have \(\varepsilon\gtrsim(w_{D_0}s_{D_0})^m\gtrsim
\exp[-m\{\log\mathcal A^2+\Psi(D_0)\}/D_0]\). Thus, failure of the
admissibility condition also implies a lower bound with the same leading
exponential rate.

For the constructed network class,
\(V=\operatorname{PDim}(\mathcal F)\lesssim SM\log S\). Since
\(S\lesssim3^{D_0}\{D^{m-1}W_\phi^2M_\phi+1\}\),
\(M\lesssim M_\phi+2D_0\), and
\(\mathcal A=W_\phi M_\phi\), it follows that
\(V\lesssim3^{D_0}D^m\mathcal A^2
\log(3^{D_0}D^m\mathcal A^2)\). Consequently,
\begin{align*}
\varepsilon_{\mathrm{low}}
\gtrsim{}&
\exp[-m\{\log\mathcal A^2+\Psi(D_0)\}/D_0]\times3^{-m}D^{-m^2/D_0}
\{\log(3^{D_0}D^m\mathcal A^2)\}^{-m/D_0}.
\end{align*}
On the other hand, Theorem~\ref{upper} gives
\(\varepsilon_{\mathrm{up}}\lesssim
D_0^m\exp[-m\{\log\mathcal A^2+\Psi(D_0)\}/D_0]\). Thus, the upper
and lower bounds share the same leading exponential term, while their
discrepancy is confined to the multiplicative factor
\(3^mD_0^mD^{m^2/D_0}
\{\log(3^{D_0}D^m\mathcal A^2)\}^{m/D_0}\).

Under the generalized exponential coordinate decay condition
\(w_ds_d\asymp\exp(-cd^\rho)\), for each fixed \(\rho>0\), the tail satisfies
\(
\log H(j)=-cj^\rho+o(j^\rho).
\)
Fix any \(\kappa>m^{1/\rho}\), and choose an integer \(D\ge\kappa D_0\) such that \(D\asymp D_0\). Then, as \(\mathcal A\to\infty\),
\[
\log\frac{H(D)}{H(D_0)^m}
\le
-c(\kappa^\rho-m)D_0^\rho+o(D_0^\rho)
\longrightarrow-\infty.
\]
Thus, \(H(D)\lesssim H(D_0)^m.\)
Moreover, the definition of \(D_0\) gives
\(
D_0\asymp(\log\mathcal A^2)^{1/(\rho+1)}.
\)
It follows that
\(
D^{m^2/D_0}
=
\exp\{m^2\log(D)/D_0\}
=
1+o(1).
\)
Since
\(
\log(3^{D_0}D^m\mathcal A^2)
=
O(\log\mathcal A^2),
\)
we also have
\(
\{\log(3^{D_0}D^m\mathcal A^2)\}^{m/D_0}
=
1+o(1).
\)
Consequently, the multiplicative discrepancy is bounded by
\(
O(D_0^m)
=
O\!\left((\log\mathcal A^2)^{m/(\rho+1)}\right).
\)
Its logarithm is \(O(\log\log\mathcal A)\), which is negligible relative to the common leading exponent.
Thus, the constructive upper bound is nearly optimal on the leading exponential scale.


\subsection{Approximation Error Analysis under Generalized Exponential Decay}

Following the discussion in the previous section, we consider a generalized exponential coordinate decay regime to quantify the coordinate decay rate. We allow the $w_ds_d$ to decay as \(a^{-qd^\rho}\), where \(q\coloneq\tau+\tau_\omega\) and \(\rho>0\). This formulation contains three regimes in a unified way: \(0<\rho<1\) corresponds to stretched-exponential decay, \(\rho=1\) recovers the ordinary exponential case, and \(\rho>1\) corresponds to super-exponential decay. This general exponential-decay regime is particularly natural for smooth functional inputs possessing rapidly decaying basis coefficients. 

Let \(q=\tau+\tau_\omega\), \(c_a=q\log a\),
\(\mathcal A^2=W_\phi^2M_\phi^2\), \(c'=mc_a^{1/(\rho+1)}\{(\rho+1)/\rho\}^{\rho/(\rho+1)}\), and
\(t=\log\mathcal A^2\). The following theorem gives a stretched-exponential approximation bound. 

\begin{theorem}
\label{thm:exp-approx-rate}

Suppose that Assumptions~\ref{ass1} and~\ref{ass2} hold. Fix
\(a>1\), \(\tau>0\), and \(\tau_\omega>0\). For \(\rho>0\), assume that 
\(s_d\asymp a^{-\tau d^\rho}\) and
\(w_d\asymp a^{-\tau_\omega d^\rho}\). 
Suppose that \(t\geq\log2\), and let \(D_0\) be defined as in Theorem~\ref{upper}, and choose \(D\geq D_0\)
such that \(H(D)\lesssim H(D_0)^m\). Then the ReLU network class
\(\mathcal F\) constructed in Theorem~\ref{upper} satisfies
\begin{equation}
\label{eq:exp-uniform-explicit-rate}
\inf_{f\in\mathcal F}\|f-f_0\|_{L^\infty}
\lesssim
\exp\{-c't^{\frac{\rho}{\rho+1}}(1+\delta_{\mathcal A,\rho})\},
\end{equation}
where
\begin{equation}
\label{remainder}
1+\delta_{\mathcal A,\rho}
=
\frac{D_0^{-1}\{t+\Psi(D_0)\}-\log D_0-\Psi(1)}
{c_a^{1/(\rho+1)}
\{t(\rho+1)/\rho\}^{\rho/(\rho+1)}}.
\end{equation}
and 
\(\delta_{\mathcal A,\rho}\to0\) in either of the following two
regimes: \(\mathcal A\to\infty\) with \(\rho>0\) fixed, or
\(\rho\to\infty\) with \(\mathcal A\geq2\) fixed.
\end{theorem}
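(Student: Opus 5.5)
The plan is to apply Theorem~\ref{upper} directly and only rewrite its bound under the decay profile $w_ds_d\asymp a^{-qd^\rho}$. Write $\log\{(w_ds_d)^{-1}\}=c_ad^\rho+O(1)$. Then $\Psi(D)=c_a\sum_{d\le D}d^\rho+O(D)$, and comparing the sum with $\int_0^D x^\rho\,dx$ gives
\[
\Psi(D)=\frac{c_a}{\rho+1}D^{\rho+1}+O(D^\rho+D).
\]
Theorem~\ref{upper} bounds the error by
\[
D_0^m\exp[-mD_0^{-1}\{t+\Psi(D_0)\}]=\exp[-m\{D_0^{-1}(t+\Psi(D_0))-\log D_0\}].
\]
Inserting the harmless constant $-\Psi(1)$, which is absorbed into $\lesssim$ since it is a fixed number, the exponent becomes
\[
-m\{D_0^{-1}(t+\Psi(D_0))-\log D_0-\Psi(1)\}.
\]
By the definition \eqref{remainder} of $\delta_{\mathcal A,\rho}$ and of $c'$, this equals exactly
\[
-c't^{\rho/(\rho+1)}(1+\delta_{\mathcal A,\rho}).
\]
So \eqref{eq:exp-uniform-explicit-rate} is an algebraic identity once Theorem~\ref{upper} applies. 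Its hypotheses are Assumptions~\ref{ass1}--\ref{ass2} and the choice of $D$ with $H(D)\lesssim H(D_0)^m$, both assumed here. The condition $t\ge\log 2$ only ensures that $D_0\ge1$ is well defined.

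The real content is $\delta_{\mathcal A,\rho}\to0$. First I locate $D_0$. The defining inequality is
\[
\log(w_ds_d)+d^{-1}\{t+\Psi(d)\}\ge\log 2,
\]
that is,
\[
-c_ad^\rho+\frac{t}{d}+\frac{c_a d^\rho}{\rho+1}+O(1)\ge \log 2.
\]
Equivalently,
\[
\frac{t}{d}\ge \frac{c_a\rho}{\rho+1}d^\rho+O(1).
\]
Hence $D_0=\lfloor d^*\rfloor$ up to $O(1)$, where
\[
(d^*)^{\rho+1}=\frac{(\rho+1)t}{\rho c_a}\{1+o(1)\}.
\]
Plugging this back, $D_0^{-1}t+c_aD_0^\rho/(\rho+1)$ evaluated at $d^*$ equals
\[
c_a^{1/(\rho+1)}\{t(\rho+1)/\rho\}^{\rho/(\rho+1)}\Big[\frac{\rho}{\rho+1}+\frac{1}{\rho+1}\Big](1+o(1)),
\]
which is precisely the denominator in \eqref{remainder}. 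Note that $d\mapsto t/d+c_ad^\rho/(\rho+1)$ is not minimized at $d^*$, so the value must be computed directly rather than via a stationarity argument; the two pieces $\rho/(\rho+1)$ and $1/(\rho+1)$ sum to $1$.

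The remaining terms $\log D_0$, $\Psi(1)$, the $O(D_0^\rho+D_0)/D_0$ error in $\Psi$, and the integer-rounding error are then each shown to be $o(t^{\rho/(\rho+1)})$ as $t\to\infty$ with $\rho$ fixed. For example, $\log D_0=O(\log t)$, and the rounding perturbation changes the expression by $O(t/D_0^2+D_0^{\rho})$ relative to the leading order $D_0^\rho$. This gives the regime $\mathcal A\to\infty$.

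The regime $\rho\to\infty$ with $\mathcal A$ fixed is the step I expect to be most delicate, because the asymptotic expansions above were in $t$, not in $\rho$. There $c_a$ is fixed, and $d^\rho$ explodes for $d\ge2$, so $D_0$ should stabilize at $1$ for large $\rho$. The numerator of \eqref{remainder} becomes $t+\Psi(1)-0-\Psi(1)=t$. The denominator is
\[
c_a^{1/(\rho+1)}\{t(\rho+1)/\rho\}^{\rho/(\rho+1)}\to t,
\]
since $c_a^{1/(\rho+1)}\to1$ and the other factors converge to $t$. Hence $1+\delta\to1$. I would verify that $D_0=1$ eventually by checking that the $d=2$ condition fails: it reads roughly $-c_a2^\rho(1-1/2)+t/2<\log2+O(1)$, which holds for large $\rho$. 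I would also need to confirm that the $O(1)$ constants hidden in $\asymp$ do not depend on $\rho$, or else state the result for exact equality $w_ds_d=a^{-qd^\rho}$ in this limit. That uniformity issue is the main obstacle, and I would handle it by working with exact decay and noting that the constants enter only through bounded additive terms in $\Psi$ and $\log(w_ds_d)$.
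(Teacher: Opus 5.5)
Your proposal is correct and follows the paper's proof. Theorem~\ref{upper} makes \eqref{eq:exp-uniform-explicit-rate} an identity once $e^{-m\Psi(1)}$ is absorbed into the constant; the fixed-$\rho$ limit comes from inverting the bracket $B(D_0)\le t<B(D_0+1)$, with $B(d)=d\log\{2/(w_ds_d)\}-\Psi(d)$, to get $D_0\sim\{(\rho+1)t/(\rho c_a)\}^{1/(\rho+1)}$; and the $\rho\to\infty$ limit comes from $D_0=1$ eventually, with the paper likewise simply taking the $\asymp$ constants uniform in $\rho$.

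A few small repairs are needed. Checking only $d=2$ is enough because $B(d+1)-B(d)=\log2+d\log\{w_ds_d/(w_{d+1}s_{d+1})\}\ge\log2$ under the monotonicity in Assumption~\ref{ass2}; you should say this explicitly (the paper instead shows $B(d)\ge d\{\log2+c_a\rho/2-O(1)\}$ for every $d\ge2$). Your remark that $t/d+c_ad^\rho/(\rho+1)$ is not minimized at $d^*$ is wrong, since its stationarity equation is exactly the defining relation of $d^*$; this does not affect your argument, because you compute the value directly. The $D_0^\rho$ in your rounding bound should read $D_0^{\rho-1}$, otherwise it is not $o(D_0^\rho)$. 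Finally, the remainder in $\Psi(d)/d$ is $O(d^{\rho-1}+1)$ rather than $O(1)$, though it is still $o(D_0^\rho)$.
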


Under this generalized exponential decay condition, it can be shown that 
\(D_0(\mathcal A)\asymp t^{1/(\rho+1)}\to\infty\) and
\(\Psi(D_0(\mathcal A))
\asymp D_0(\mathcal A)^{\rho+1}\asymp t\).
Consequently,
\(\varphi(t)\asymp t/D_0(\mathcal A)
\asymp t^{\rho/(\rho+1)}=o(t)\).
Therefore, up to polynomial factors in \(D_0(\mathcal A)\), the leading approximation rate is sub-polynomial in \(\mathcal A^2\). In particular, for every \(\alpha>0\), \(\exp\{-c\varphi(\log\mathcal A^2)\}\gg(\mathcal A^2)^{-\alpha}\) as $\mathcal A^2$ is sufficiently large. Together with the preceding lower-bound comparison, this shows that the best attainable approximation rate is sub-polynomial in the network size budget.

We now examine the bound as \(\mathcal A\to\infty\) for fixed \(\rho>0\) and as \(\rho\to\infty\) for fixed \(\mathcal A\). For every
fixed finite \(\rho>0\), the approximation error decays
sub-polynomially in the network size budget as
\(\mathcal A\to\infty\), reflecting the intrinsic difficulty of
approximating an unrestricted smooth functional with genuinely
infinite-dimensional inputs. By contrast, as \(\rho\to\infty\), the
coordinate importance decays increasingly rapidly, and the
approximation problem progressively approaches a univariate one, that is, the
upper-bound rate consequently converges to
\(\mathcal A^{-2m}\), the approximation rate for a \(\mathcal C^m\)-smooth univariate function (see \citet{lu2021deep}).

\section{Numerical experiment results}
\label{sim}

Let \(\{\nu_d(t)\colon d\geq1\}\) be the Fourier basis on \([0,1]\), and
generate the functional input by
\(X(t)=\sum_{d=1}^{D_{\mathrm{total}}}\xi_d\nu_d(t)\), where
\(D_{\mathrm{total}}=100\). Write \(\xi_d=s_{d,\rho}u_d\), with
\(u_d\) independently generated from \(\operatorname{Unif}[-1,1]\).
Define \(a_{d,\rho}=\exp(-\lambda d^\rho)\), where \(\lambda=0.5\), and
set \(s_{d,\rho}=w_{d,\rho}=a_{d,\rho}^{1/2}\), so that
\(w_{d,\rho}s_{d,\rho}=a_{d,\rho}\). We consider
\(\rho\in\{0.5,1,2\}\), corresponding respectively to stretched-
exponential, ordinary exponential, and super-exponential decay.

Let \(z_{d,\rho}=w_{d,\rho}\xi_d=a_{d,\rho}u_d\), and let
\(\mathcal I=\{(j_r,\ell_r)\colon r=1,\ldots,R\}\) be a fixed collection of
\(R=64\) distinct coordinate pairs sampled uniformly without replacement
from \(\{(j,\ell)\colon 1\leq j<\ell\leq D_{\mathrm{total}}\}\). We use the target
\begin{align*}
f_{0,\rho}(X)
={}&\sum_{k=1}^{K}b_k
\sin\Bigg\{
\beta_k+
\sum_{d=1}^{D_{\mathrm{total}}}\gamma_{kd}z_{d,\rho}+\frac{\eta}{\sqrt R}
\sum_{r=1}^{R}\delta_{kr}
\sin(z_{j_r,\rho})\sin(z_{\ell_r,\rho})
\Bigg\}.
\end{align*}
Here, \(K=128\) and \(\eta=0.75\). The parameters \(\beta_k\) are
independently generated from \(\operatorname{Unif}[0,2\pi]\), while
\(\gamma_{kd}\) and \(\delta_{kr}\) are independent Rademacher variables.
The coefficients \(b_k\) are independently generated from \(N(0,1)\) and
normalized so that \(\sum_{k=1}^K|b_k|=1\). The same realizations of
\(\mathcal I\), \(\beta_k\), \(\gamma_{kd}\), \(\delta_{kr}\), and \(b_k\)
are used for all values of \(\rho\), \(D\), and the network width.

For each \(D\in\{1,5,10,30,50,70\}\), define
\(f_{0,\rho}^{(D)}(X)=f_{0,\rho}(P_DX)\), where
\(P_DX=\sum_{d=1}^{D}\xi_d\nu_d\). A fully connected ReLU network is
trained to approximate \(f_{0,\rho}^{(D)}\) using
\((u_1,\ldots,u_D)\) as input. We use four hidden layers with common
width \(W\in\{2,4,8,16,32,64\}\), holding the depth and all other
architectural specifications fixed across \(D\) and \(\rho\). We generate
\(50{,}000\) training observations, \(10{,}000\) validation observations,
and \(50{,}000\) independent test observations. 
Optimization uses Adam with initial learning rate \(10^{-3}\), batch size \(1024\), and at most \(2000\) steps; early stopping is based on validation error. For each \((\rho,D,W)\), we train from ten independent initializations and retain the one with the smallest validation total error. For a width budget \(W\), all fitted networks with hidden-layer width no larger than \(W\) are admissible, and the final model is again selected using validation error only. We report the empirical total \(L^\infty\) error against the full target \(f_{0,\rho}(X)\) on the independent test sample.
Figure~\ref{fig:nonlinear-total-error-vs-width} shows that the total error is nearly insensitive to width when \(D=1\), because truncation dominates. Once more coordinates are retained, increasing width generally reduces the error before the improvement levels off. Faster coordinate decay produces smaller errors throughout, supporting the conclusion that it reduces both the effective dimension and the capacity required for accurate approximation.

\begin{figure*}[t!]
    \centering
    \includegraphics[width=\textwidth]{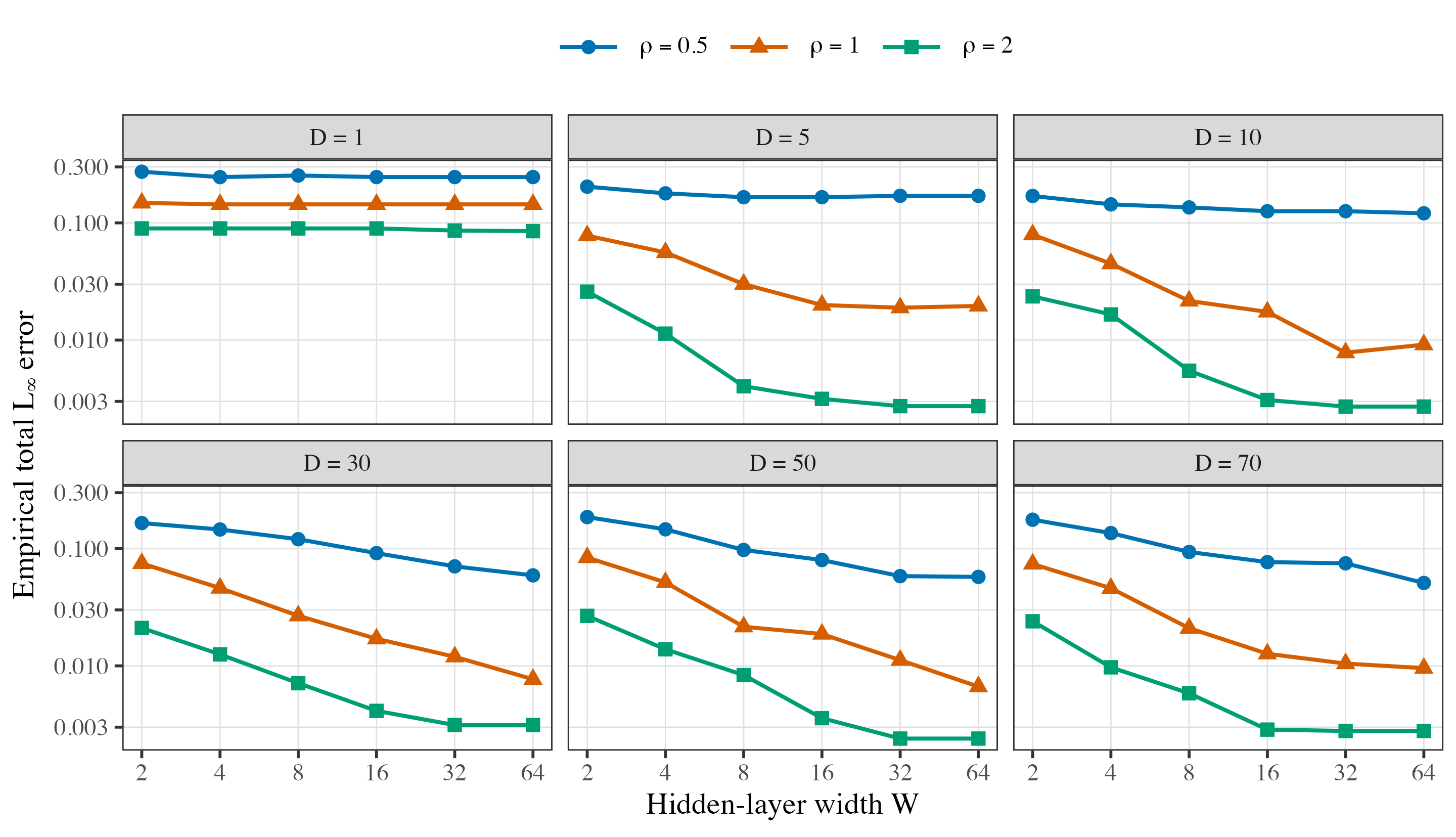}
    \caption{Total approximation error as a function of the common hidden-layer width.}
    \label{fig:nonlinear-total-error-vs-width}
\end{figure*}


\section{Conclusion}
\label{conclusion}

This paper studies the approximation of smooth scalar-valued functionals by ReLU networks under coordinate-dependent decay. Working with a fixed basis representation, we quantify the importance of each coordinate through its score magnitude and the directional  sensitivity of the target functional. By combining coordinate truncation, anisotropic partitioning, local Taylor approximation, and explicit ReLU network constructions, we derive a non-asymptotic uniform approximation bound. 
Under generalized exponential coordinate decay, the complementary lower bound matches the constructive upper bound on the leading exponential scale, establishing a nearly optimal approximation rate with stretched-exponential decay in the logarithm of the network size budget. 

This characterization shows how the dimension, which can be effectively retained by networks, grows with the network size and why, over the general functional class considered here, allowing unrestricted coordinate structures precludes a polynomial approximation rate. The slow decay of the approximation error suggests a potential limitation for network-based statistical estimation: reducing approximation bias may require a rapidly growing network, which in turn increases the network complexity that must be controlled during estimation. Establishing the resulting statistical rates, however, requires further analysis. Identifying structural restrictions that permit polynomial estimation rates and developing the corresponding theory remain directions for future work.


\bibliographystyle{agsm}
\bibliography{AFNN-arXiv}

\appendix

\setcounter{equation}{0}
\renewcommand{\theequation}{A.\arabic{equation}}
\renewcommand{\theHequation}{appendix.A.\arabic{equation}}

\section*{Appendix}


\begin{proof}[Proof of Proposition 1]
Note that since $L$ is bounded linear, it is Fr\'echet differentiable
everywhere with $\mathcal DL(X)=L$ and $\mathcal D^sL(X)=0$ for all $s\ge2$.
We prove by induction on $m$.
For $m=1$, the chain rule yields
\(
\mathcal Df(X)=\mathcal Dg(LX)\circ \mathcal DL(X)=\mathcal Dg(LX)\circ L,
\)
hence $\mathcal Df(X)[h]=\mathcal Dg(LX)[Lh]$, proving the result for $m=1$.

Assume the result holds for $s\le m-1$. Fix $h_1,\ldots,h_{m-1}\in\mathcal H$
and define \[G(u)\coloneq\mathcal D^{m-1}g(u)
[Lh_1,\ldots,Lh_{m-1}]\] for \(u\in\Omega\). Since $g$ is $m$-times
Fr\'echet differentiable, $G$ is Fr\'echet differentiable and
\(\mathcal DG(u)[v]=\mathcal D^mg(u)[v,Lh_1,\ldots,Lh_{m-1}]\) for
\(v\in\Omega\).
Using the definition of $\mathcal D^mf$ and the induction hypothesis,
\begin{align*}
&\mathcal D^{m}f(X)[h_1,\ldots,h_{m-1},h_m] \\
&\quad=
\lim_{t\to0}\frac{1}{t}
\Big\{\mathcal D^{m-1}f(X+t h_m)
[h_1,\ldots,h_{m-1}] 
-\mathcal D^{m-1}f(X)
[h_1,\ldots,h_{m-1}]\Big\} \\
&\quad=
\lim_{t\to0}\frac{1}{t}
\Big\{G\big(L(X+t h_m)\big)-G(LX)\Big\}.
\end{align*}
Since $L$ is linear, $L(X+t h_{m})=LX+tLh_{m}$, and the last limit equals
\(\mathcal DG(LX)[Lh_m]=\mathcal D^mg(LX)
[Lh_m,Lh_1,\ldots,Lh_{m-1}]\).
By symmetry of $\mathcal D^m g(LX)$, we may permute the arguments, proving the result for $m$. This completes the induction.
\end{proof}

\begin{proof}[Proof of Proposition 2]
We first prove the derivative bound for \(Q_p\). Since
\(\xi_j=\langle X,\nu_j\rangle\), we have
\(D\xi_j[\nu_d]=\langle \nu_d,\nu_j\rangle=\mathbf 1\{j=d\}\).
For a fixed monomial \(\eta_{j_1,\ldots,j_p}\prod_{\ell=1}^p\xi_{j_\ell}\),
taking \(q\) directional derivatives along
\(\nu_{d_1},\ldots,\nu_{d_q}\) amounts to differentiating \(q\) of the \(p\) scores. Therefore, each nonzero term after differentiation contains the coefficient \(\eta_{j_1,\ldots,j_p}\), \(q\) Kronecker factors that match \(q\) indices with \(d_1,\ldots,d_q\), and \(p-q\) remaining score factors. Hence,
\begin{align*}
D^q Q_p(X)[\nu_{d_1},\ldots,\nu_{d_q}]=\sum_{k_1,\ldots,k_{p-q}\ge1}
\eta^*_{d_1,\ldots,d_q,k_1,\ldots,k_{p-q}}
\xi_{k_1}\cdots \xi_{k_{p-q}},
\end{align*}
where \(\eta^*\) denotes the coefficient after arranging the indices in
nondecreasing order. By the product-type coefficient decay condition,
\(\left|\eta^*_{d_1,\ldots,d_q,k_1,\ldots,k_{p-q}}\right|
\lesssim(\prod_{a=1}^qw_{d_a})(\prod_{b=1}^{p-q}w_{k_b})\).
Using \(|\xi_k|\le s_k\), we obtain
\begin{align*}
\left|
D^q Q_p(X)[\nu_{d_1},\ldots,\nu_{d_q}]
\right|
&\lesssim\sum_{k_1,\ldots,k_{p-q}\ge1}\left(\prod_{a=1}^q w_{d_a}\right)
\left(\prod_{b=1}^{p-q} w_{k_b}s_{k_b}\right) \\
&=\left(\prod_{a=1}^q w_{d_a}\right)
\left(\sum_{k\ge1}w_ks_k\right)^{p-q}.
\end{align*}
Since \(\sum_{k\ge1}w_ks_k<\infty\), this gives
\(\left|D^qQ_p(X)[\nu_{d_1},\ldots,\nu_{d_q}]\right|
\lesssim\prod_{a=1}^qw_{d_a}\).
It remains to pass from \(Q_p\) to \(f_0=G\circ Q_p\). By the  chain rule, for \(1\le s\le m\),
\begin{align*}
D^sf_0(X)[\nu_{d_1},\ldots,\nu_{d_s}]=
\sum_{\pi\in\mathcal P_s}
G^{(|\pi|)}(Q_p(X))
\prod_{B\in\pi}
D^{|B|}Q_p(X)[\nu_{d_i}\colon i\in B],
\end{align*}
where \(\mathcal P_s\) denotes the set of all partitions of
\(\{1,\ldots,s\}\). Since \(Q_p\) is a polynomial of order \(p\),
\(D^qQ_p(X)=0\) for \(q>p\). Hence only blocks \(B\) with
\(|B|\le p\) contribute.
For each such block, the first part of the proof gives
\(\left|D^{|B|}Q_p(X)[\nu_{d_i}\colon i\in B]\right|
\lesssim\prod_{i\in B}w_{d_i}\).
Since the derivatives of \(G\) up to order \(m\) are bounded, each product in the partition expansion is bounded by
\(
\prod_{\ell=1}^s w_{d_\ell}
\)
up to some constant. The number of partitions depends only on \(s\le m\), so summing over
\(\pi\in\mathcal P_s\) yields
\(\left|D^sf_0(X)[\nu_{d_1},\ldots,\nu_{d_s}]\right|
\lesssim\prod_{\ell=1}^sw_{d_\ell}\).
This proves the proposition.
\end{proof}

For \(D\geq D_0\), define the finite-dimensional trifling region by
\[
\mathcal R_D
\coloneq
\mathcal R(\Omega_{D_0},\bm L,\bm\delta)
\times\prod_{d=D_0+1}^{D}[-s_d,s_d]
\subseteq\Omega_D,
\]
where
\begin{align*}
\mathcal R(\Omega_{D_0},\bm L,\bm\delta)
&=\bigcup_{d=1}^{D_0}
\left\{\bm x^{(D_0)}\in\Omega_{D_0}\colon
\xi_d\in\mathcal I_d\right\},\\
\mathcal I_d
&=\bigcup_{i=1}^{L_d-1}
\left(-s_d+\frac{2s_di}{L_d}-\delta_d,
-s_d+\frac{2s_di}{L_d}\right].
\end{align*}
For a continuous function \(g\) on \(\Omega_D\), define its
directional modulus of continuity by
\(
\omega_{g,d}(r)
\coloneq
\sup\{|g(\bm x+t\bm e_d)-g(\bm x)|\colon
\bm x,\bm x+t\bm e_d\in\Omega_D,\ |t|\leq r\}.
\)

\begin{proof}[Proof of Theorem \ref{upper}]
For \(X\in\mathcal H\) with score vector
\(\bm x=(\xi_1,\xi_2,\ldots)\in\Omega\), let
\(X_0=\sum_{d=1}^{D_0}\xi_{0,d}\nu_d\). Define
\(h_D\coloneq P_DX-X_0=\sum_{d=1}^D\xi_{h,d}\nu_d\),
where \(\xi_{h,d}=\xi_d-\xi_{0,d}\) for \(d\leq D_0\) and
\(\xi_{h,d}=\xi_d\) for \(D_0<d\leq D\).
Since \(P_{D}X_0=X_0\), the Taylor expansion of
\(f_0^{(D)}\) at \(X_0\) is
\begin{equation}
\label{taylor}
\begin{aligned}
f_0^{(D)}(X)
=f_0(X_0)&+\sum_{s=1}^{m-1}\frac{1}{s!}
\sum_{d_1,\ldots,d_s=1}^{D}
\mathcal D^sf_0(X_0)[\nu_{d_1},\ldots,\nu_{d_s}]\\
&\times\prod_{j=1}^s\xi_{h,d_j}
+R_{m,D}(X,X_0),
\end{aligned}
\end{equation}
where
\begin{align*}
R_{m,D}(X,X_0)
={}&
\frac{1}{(m-1)!}
\int_0^1(1-\tau)^{m-1}\times\mathcal D^mf_0(X_0+\tau h_{D})[h_{D},\ldots,h_{D}]\,d\tau.
\end{align*}
Write
\(
\Delta_{D}(X)\coloneq f_0(X)-f^{(D)}_0(X),
\)
then the mean value theorem gives
\begin{equation}
\label{eq:projection-error}
|\Delta_{D}(X)|
\le
C_0\sum_{d>D}w_d|\xi_d|
\le
C_0H(D).
\end{equation}
We now construct a network approximation of the Taylor polynomial in
\eqref{taylor}. Given \(L>0\) and a positive sequence
\(\{\ell_d\}\), define
\(
D_0\coloneq\max\{d\colon L\ell_ds_d\ge1\},
\
L_d\coloneq\lfloor L\ell_ds_d\rfloor,
\ d=1,\ldots,D_0.
\)
For \(d\le D_0\), partition \([-s_d,s_d]\) into \(L_d\) intervals and
choose \(\delta_d\in(0,2s_d/(3L_d)]\). 
Denote the resulting cells in
\(\Omega'_{D_0}\coloneq\Omega_{D_0}\setminus
\mathcal R(\Omega_{D_0},\bm L,\bm\delta)\)
by \(P_{\bm\theta}^{(D_0)}\), where
\(\theta_d\in\{0,\ldots,L_d-1\}\), and define
\begin{align*}
P_{\bm\theta}^{(D)}
&\coloneq P_{\bm\theta}^{(D_0)}
\times\prod_{d=D_0+1}^{D}[-s_d,s_d],\\
X_{\bm\theta}(t)
&\coloneq
\sum_{d=1}^{D_0}
\left(-s_d+\frac{2s_d\theta_d}{L_d}\right)\nu_d(t).
\end{align*}
For \(\bm x^{(D)}\in P_{\bm\theta}^{(D)}\), define
\[
\xi_{h_{\bm\theta},d}
\coloneq
\begin{cases}
\displaystyle
\xi_d+s_d-\frac{2s_d\theta_d}{L_d},
&1\le d\le D_0,\\[6pt]
\xi_d,
&D_0<d\le D.
\end{cases}
\]
The remainder of the Taylor expansion satisfies
\begin{align}
\label{eq:taylor-remainder-expansion}
|R_{m,D}(X,X_{\bm\theta})|
&\le\frac{C_0}{m!}
\sum_{d_1,\ldots,d_m=1}^{D}
\prod_{j=1}^m w_{d_j}
|\xi_{h_{\bm\theta},d_j}|
\nonumber\\
&=\frac{C_0}{m!}
\left(\sum_{d=1}^{D}w_d|\xi_{h_{\bm\theta},d}|\right)^m
\nonumber\\
&=\frac{C_0}{m!}\left\{
\sum_{d=1}^{D_0}w_d
\left|\xi_d+s_d-\frac{2s_d\theta_d}{L_d}\right|
+\sum_{d=D_0+1}^{D}w_d|\xi_d|
\right\}^m.
\end{align}
For \(\bm x^{(D_0)}\in P_{\bm\theta}^{(D_0)}\),
\[
\left|
\xi_d+s_d-\frac{2s_d\theta_d}{L_d}
\right|
\le
\frac{2s_d}{L_d},
\qquad d=1,\ldots,D_0.
\]
Since \(L_d=\lfloor L\ell_ds_d\rfloor\) and
\(L\ell_ds_d\geq1\) for \(d\leq D_0\), we have
\(2s_d/L_d\lesssim L^{-1}\ell_d^{-1}\). Consequently, for
\(\bm x^{(D)}\in P_{\bm\theta}^{(D)}\),
\(
|\xi_{h_{\bm\theta},d}|
\lesssim
L^{-1}\ell_d^{-1},
\ d\leq D_0.
\)
In addition, since \(|\xi_d|\le s_d\), it follows that
\begin{equation}
\label{s3}
\begin{aligned}
|R_{m,D}(X,X_{\bm\theta})|
&\lesssim
\left\{
L^{-1}\sum_{d=1}^{D_0}w_d\ell_d^{-1}
+\sum_{d=D_0+1}^{D}w_ds_d
\right\}^m\\
&=
\left\{
L^{-1}\sum_{d=1}^{D_0}w_d\ell_d^{-1}
+H(D_0)
\right\}^m.
\end{aligned}
\end{equation}

For \(s=1,\ldots,m-1\) and
\(\bm d=(d_1,\ldots,d_s)\in\{1,\ldots,D\}^s\), define the
piecewise-constant Taylor coefficient map
\begin{align*}
\mathcal Q_{\bm d}^{(s)}(\bm x^{(D)})
&\coloneq
\sum_{\bm\theta}
\mathcal D^sf_0(X_{\bm\theta})
[\nu_{d_1},\ldots,\nu_{d_s}]
\mathbf 1_{P_{\bm\theta}^{(D_0)}}(\bm x^{(D_0)}).
\end{align*}
Also define
\[
\mathcal Q^{(0)}(\bm x^{(D)})
\coloneq
\sum_{\bm\theta}
f_0(X_{\bm\theta})
\mathbf 1_{P_{\bm\theta}^{(D_0)}}(\bm x^{(D_0)}).
\]
For each fixed \(\bm d\), these maps have \(\prod_{d=1}^{D_0}L_d\) distinct values. 
The Taylor polynomial to be approximated is
\begin{align*}
A_{\bm\theta}^{(D)}(\bm x^{(D)})
\coloneq{}&f_0(X_{\bm\theta})
+\sum_{s=1}^{m-1}\frac{1}{s!}
\sum_{d_1,\ldots,d_s=1}^{D}
\mathcal D^sf_0(X_{\bm\theta})[\nu_{d_1},\ldots,\nu_{d_s}]\times
\prod_{j=1}^s\xi_{h_{\bm\theta},d_j}.
\end{align*}
The network construction steps are as follows:

\noindent
\textbf{Step 1 (Discretization).}
By Lemma~\ref{lem:B1}, for \(d=1,\ldots,D_0\), there exists a ReLU
network \(\psi_d\) with width \(6W_{\psi,d}\) and depth \(3M_\psi\)
such that \(W_{\psi,d}^2M_\psi^2=L_d\) and
\[
\psi_d(\xi_d)
=
-s_d+\frac{2s_di}{L_d}
\]
whenever
\begin{align*}
\xi_d\in\Bigl[&-s_d+\frac{2s_di}{L_d},-s_d+\frac{2s_d(i+1)}{L_d}
-\delta_d\mathbf 1_{\{i<L_d-1\}}\Bigr],\ i=0,\ldots,L_d-1.
\end{align*}
For \(\bm x^{(D_0)}\in P_{\bm\theta}^{(D_0)}\),
\[
\tilde\psi(\bm x^{(D_0)})
=\left(-s_d+\frac{2s_d\theta_d}{L_d}\right)_{d=1}^{D_0}.
\]
The network \(\tilde\psi(\cdot)\) has width
\(6\sum_{d=1}^{D_0}W_{\psi,d}\) and depth \(3M_\psi\).
The index set
\(\{0,\ldots,L_1-1\}\times\cdots\times\{0,\ldots,L_{D_0}-1\}\)
is in one-to-one correspondence with
\(\{0,\ldots,\prod_{d=1}^{D_0}L_d-1\}\) through
\(i_{\bm\theta}\coloneq\sum_{d=1}^{D_0}\theta_d
\prod_{k=1}^{d-1}L_k\), where
\(\theta_d\in\{0,1,\ldots,L_d-1\}\).
Accordingly, define
\[\psi_{\mathrm{enc}}(\bm x^{(D_0)})
\coloneq\sum_{d=1}^{D_0}\frac{L_d}{2s_d}
\{\psi_d(\xi_d)+s_d\}\prod_{k=1}^{d-1}L_k.\]
For every \(\bm x^{(D_0)}\in P_{\bm\theta}^{(D_0)}\), we have \(\psi_{\mathrm{enc}}(\bm x^{(D_0)})=i_{\bm\theta}\).

\noindent
\textbf{Step 2 (Approximation of the Taylor coefficients).}
Let \(c_{s,\bm d}\coloneq C_0\prod_{j=1}^sw_{d_j}\) and
\(e_{s,\bm d}\coloneq c_{s,\bm d}\prod_{j=1}^ss_{d_j}
=C_0\prod_{j=1}^sw_{d_j}s_{d_j}\).
Assumption \ref{ass2} gives
\(
\left|\mathcal D^sf_0(X_{\bm\theta})[\nu_{d_1},\ldots,\nu_{d_s}]\right|\le c_{s,\bm d}.
\)
The number of cells satisfies
\(\prod_{d=1}^{D_0}L_d=L^{D_0}\prod_{d=1}^{D_0}\ell_ds_d\).
Define \(\Psi(D_0)\coloneq-\sum_{d=1}^{D_0}\log(\ell_ds_d)\),
so that we may equivalently write
\(
\prod_{d=1}^{D_0}L_d= L^{D_0}e^{-\Psi(D_0)}.
\)
For each
\(\bm d=(d_1,\ldots,d_s)\in\{1,\ldots,D\}^s\), Lemma 2, composed with the radix encoder
$\psi_{\mathrm{enc}}$, gives a ReLU network
$\Phi_{\bm d,s}$ satisfying 
\begin{equation}
\label{partial}
\left|
\Phi_{\bm d,s}(\bm x^{(D)})
-
\mathcal Q_{\bm d}^{(s)}(\bm x^{(D)})
\right|
\lesssim
c_{s,\bm d}
\left\{
L^{D_0}e^{-\Psi(D_0)}
\right\}^{-m}.
\end{equation}
Similarly, there exists a network \(\Phi_0\) satisfying
\begin{equation}
\label{partial0}
\left|
\Phi_0(\bm x^{(D)})
-
\mathcal Q^{(0)}(\bm x^{(D)})
\right|
\lesssim
\left\{
L^{D_0}e^{-\Psi(D_0)}
\right\}^{-m}.
\end{equation}
Each \(\Phi_{\bm d,s}\) has width
\(
16rW_\phi\lceil\log_2(8W_\phi)\rceil
\)
and depth
\(
5(M_\phi+2)\lceil\log_2(4M_\phi)\rceil+3M_\psi,
\)
where
\(
W_\phi^2M_\phi^2=L^{D_0}e^{-\Psi(D_0)}.
\)
Since \(\bm d\in\{1,\ldots,D\}^s\), the parallel network for all
order-\(s\) coefficient maps has width
\(
16rD^sW_\phi\lceil\log_2(8W_\phi)\rceil.
\)
Notice that \(D^s\) counts the number of coefficient maps, whereas
each coefficient map is fitted on only
\(\prod_{d=1}^{D_0}L_d\) locations.

\medskip
\noindent
\textbf{Step 3 (Monomial approximation).}
For \(\bm d=(d_1,\ldots,d_s)\in\{1,\ldots,D\}^s\), Lemma~\ref{lem:B4}, applied on the corresponding coordinate ranges, gives a ReLU network \(J_{\bm d,s}\) with width \(9W_J+s-1\) and depth \(7s^2M_J\) such that
\begin{equation}
\label{prod-monomial}
\left|J_{\bm d,s}(\bm x^{(D)})
-\prod_{j=1}^s\xi_{h_{\bm\theta},d_j}\right|\le
2^s\left(\prod_{j=1}^ss_{d_j}\right)9sW_J^{-7sM_J}.
\end{equation}
The coordinate-dependent factor follows by applying the multiplication
network on the corresponding bounded intervals; it does not require a
rescaling of the input domain.
By Lemma~\ref{lem:B3}, the multiplication network
\(\Phi_\times\) can be chosen so that, on the relevant bounded
rectangles,
\begin{equation}
\label{prod}
\left|
\Phi_\times(u,v)-uv
\right|
\lesssim e_{s,\bm d}W_\times^{-M_\times}.
\end{equation}

Combining the preceding networks, define
\begin{align*}
\widehat A_{\bm\theta}^{(D)}(\bm x^{(D)})
=\Phi_0(\bm x^{(D)})+\sum_{s=1}^{m-1}\sum_{d_1,\ldots,d_s=1}^{D}
\Phi_\times\left(
\frac{\Phi_{\bm d,s}(\bm x^{(D)})}{s!},
J_{\bm d,s}(\bm x^{(D)})\right).
\end{align*}
Using \eqref{partial}--\eqref{prod} and the triangle inequality gives
\begin{align*}
\left|\widehat A_{\bm\theta}^{(D)}(\bm x^{(D)})
-A_{\bm\theta}^{(D)}(\bm x^{(D)})\right|\lesssim \sum_{s=0}^{m-1}\frac{1}{s!}
\sum_{d_1,\ldots,d_s=1}^{D}e_{s,\bm d}
\big\{(W_\phi^2M_\phi^2)^{-m}
+9sW_J^{-7sM_J}+6W_\times^{-M_\times}\big\}.
\end{align*}
Indeed,
\begin{align*}
\sum_{d_1,\ldots,d_s=1}^{D}
\prod_{j=1}^sw_{d_j}s_{d_j}
&=\left(\sum_{d=1}^{D}w_ds_d\right)^s\le\left(\sum_{d=1}^{\infty}w_ds_d\right)^s<\infty.
\end{align*}
Since \(\sum_{d=1}^{\infty}w_ds_d<\infty,\) we obtain
\begin{equation}
\label{eq:network-approximation}
\begin{aligned}
\left|\widehat A_{\bm\theta}^{(D)}(\bm x^{(D)})
-A_{\bm\theta}^{(D)}(\bm x^{(D)})\right|\lesssim (W_\phi^2M_\phi^2)^{-m}
+\max_{1\le s\le m-1}9sW_J^{-7sM_J}+6W_\times^{-M_\times}.
\end{aligned}
\end{equation}
To attain an approximation error \(\varepsilon\), the coefficient-fitting term requires \(W_\phi M_\phi\gtrsim\varepsilon^{-1/(2m)},\) whereas the monomial and multiplication networks require only
\(
M_J\log W_J\gtrsim\log(1/\varepsilon),\ M_\times\log W_\times\gtrsim\log(1/\varepsilon).
\)
Therefore, the coefficient-fitting networks determine the leading
complexity. Since the number of coefficient maps is bounded by
\(\sum_{s=0}^{m-1}D^s=\mathcal O(D^{m-1})\), the combined network can be chosen, up to the logarithmic factors appearing above, with
\begin{align*}
W=\mathcal O(D^mW_\phi),\ M=\mathcal O(M_\phi),\ SM=\mathcal O(D^mW_\phi^2M_\phi^2).
\end{align*}
In particular, with \(\mathcal A^2\coloneq W_\phi^2M_\phi^2,\) the total network complexity is \(SM=\mathcal O(D^mA^2).\) Thus 
\[
\left|\widehat A_{\bm\theta}^{(D)}(\bm x^{(D)})
-A_{\bm\theta}^{(D)}(\bm x^{(D)})\right|\lesssim (W_\phi^2M_\phi^2)^{-m}.
\]
Consequently, combining \eqref{s3} and
\eqref{eq:network-approximation} gives, uniformly over
\(\bm x^{(D)}\in\Omega_D\setminus\mathcal R_D\),
\begin{equation}
\label{eq:total-approximation}
\begin{aligned}
\left|\widehat A_{\bm\theta}^{(D)}(\bm x^{(D)})
-f_0^{(D)}(X)\right|\lesssim
\left\{L^{-1}\sum_{d=1}^{D_0}w_d\ell_d^{-1}
+H(D_0)\right\}^m
\end{aligned}
\end{equation}
After balancing
\(
L^{-1}\sum_{d=1}^{D_0}w_d\ell_d^{-1}=H(D_0)
\)
the Taylor remainder is bounded by a constant multiple of
\(H(D_0)^m\), 
so the network approximation error is of smaller order. Therefore,
the leading scale in \eqref{eq:total-approximation} is
\(H(D_0)^m\), determined by the Taylor remainder. 

We now optimize the grid allocation.  We balance the retained local
discretization error with \(H(D_0)\) by choosing
\begin{equation}
\label{opt-sol}
L=\frac{\sum_{d=1}^{D_0}w_d\ell_d^{-1}}{H(D_0)}.
\end{equation}
The coefficient-fitting complexity for each coefficient map satisfies
\(
\mathcal A^2=W_\phi^2M_\phi^2=L^{D_0}\prod_{d=1}^{D_0}\ell_ds_d.
\)
Substituting \eqref{opt-sol} gives
\begin{equation}
\label{budget}
\mathcal  A^2=H(D_0)^{-D_0}
\left(\sum_{d=1}^{D_0}w_d\ell_d^{-1}\right)^{D_0}\times\prod_{d=1}^{D_0}\ell_ds_d.
\end{equation}

For fixed \(D_0\), minimizing the required point-fitting budget \eqref{budget} is
equivalent to minimizing
\[
\left(
\sum_{d=1}^{D_0}w_d\ell_d^{-1}
\right)^{D_0}
\prod_{d=1}^{D_0}\ell_d
\]
over $\{\ell_d\colon d\ge1\}$. Let \(a_d\coloneq w_d\ell_d^{-1}\). Then
\[
\left(
\sum_{d=1}^{D_0}w_d\ell_d^{-1}
\right)^{D_0}
\prod_{d=1}^{D_0}\ell_d
=
\frac{
\left(\sum_{d=1}^{D_0}a_d\right)^{D_0}
}{
\prod_{d=1}^{D_0}a_d
}
\prod_{d=1}^{D_0}w_d.
\]
By the arithmetic--geometric mean inequality,
\[
\frac{
\left(\sum_{d=1}^{D_0}a_d\right)^{D_0}
}{
\prod_{d=1}^{D_0}a_d
}
\ge D_0^{D_0},
\]
with equality if and only if \(a_1=\cdots=a_{D_0}\).
Therefore, up to a common scaling absorbed into \(L\), the optimal
allocation is
\(
\ell_d=w_d.
\)
Consequently,
\[
\mathcal  A^2=L^{D_0}\prod_{d=1}^{D_0}w_ds_d
=L^{D_0}e^{-\Psi(D_0)},
\]
The balancing relation \eqref{opt-sol} becomes \(L^{-1}D_0=H(D_0).\) since \(\mathcal A^2=\prod_{d=1}^{D_0}L_d\ge L_1= L\) and the balancing relation gives \(H(D_0)=D_0/L\), we have \(\mathcal A^{-2m}\le L^{-m}=D_0^{-m}H(D_0)^m\le H(D_0)^m.\) Therefore, the network approximation error is no larger than the Taylor remainder.

It follows that
\[
\mathcal A^2
=
\left(
\frac{D_0}{H(D_0)}
\right)^{D_0}
e^{-\Psi(D_0)},
\]
and hence \(D_0\) is determined by
\[
\log \mathcal A^2
=
D_0\log\left(\frac{D_0}{H(D_0)}\right)
-\Psi(D_0).
\]
Thus,
\[
L
=
\exp\left[
\frac{1}{D_0}
\{\log \mathcal A^2+\Psi(D_0)\}
\right].
\]
Substituting these bounds into \eqref{eq:total-approximation} yields
\begin{align*}
\|\widetilde f-f_0^{(D)}\|_{L^\infty(
\Omega_D\setminus\mathcal R_D)}&\lesssim H(D_0)^m+\mathcal  A^{-2m}\lesssim H(D_0)^m\\
&=D_0^m
\exp\left[
-\frac{m}{D_0}
\{\log \mathcal  A^2+\Psi(D_0)\}
\right].
\end{align*}
For each fixed \(s\) and \(\bm d=(d_1,\ldots,d_s)\), the
coefficient-fitting network \(\Phi_{\bm d,s}\) can be chosen, up to
logarithmic factors, with size \(\mathcal O(W_\phi^2M_\phi)\), width
\(\mathcal O(W_\phi)\), and depth \(\mathcal O(M_\phi)\). The number
of coefficient maps appearing in the Taylor polynomial is
\[
\sum_{s=0}^{m-1}D^s
\le
mD^{m-1}
=
\mathcal O(D^{m-1}).
\]
Since these coefficient-fitting subnetworks are
constructed in parallel, their sizes and widths are additive, whereas
their depths are given by the maximum depth. Therefore, the combined
coefficient-fitting network satisfies
\begin{align*}
S_{\mathrm{coef}}\lesssim D^{m-1}W_\phi^2M_\phi,\ W_{\mathrm{coef}}\lesssim D^{m-1}W_\phi,\ M_{\mathrm{coef}}\lesssim M_\phi.
\end{align*}
The discretization network is shared by all coefficient maps, while the
monomial and multiplication subnetworks require only logarithmic
complexity in the target accuracy. Compared with the coefficient-fitting subnetworks, the complexities of the shared discretization network and the monomial and multiplication subnetworks are negligible and therefore do not affect the leading orders of the network size, width, and depth. Before removing the trifling region, the resulting network therefore satisfies
\begin{align*}
\widetilde S\lesssim D^{m-1}W_\phi^2M_\phi,\ \widetilde W\lesssim D^{m-1}W_\phi,\ \widetilde M\lesssim M_\phi.
\end{align*}

It remains to extend the preceding approximation from
\(\Omega_D\setminus\mathcal R_D\) to all of \(\Omega_D\). Because the
construction is valid for every positive
\(\delta_d\leq2s_d/(3L_d)\), choose
\[
\delta_d
\leq
\min\left\{
\frac{2s_d}{3L_d},
\frac{H(D_0)^m}{C_0D_0w_d}
\right\},
\qquad d=1,\ldots,D_0.
\]
Then \(C_0\sum_{d=1}^{D_0}w_d\delta_d\leq H(D_0)^m\).
Applying Lemma~\ref{lem:remove-trifling-region} with
\(g=f_0^{(D)}\) gives a ReLU network \(f^\sharp\) satisfying
\begin{align*}
\|f^\sharp-f_0^{(D)}\|_{L^\infty(\Omega_D)}
&\lesssim
H(D_0)^m
+C_0\sum_{d=1}^{D_0}w_d\delta_d\lesssim
H(D_0)^m.
\end{align*}
The projection error satisfies \(H(D)\asymp H(D_0)^m\).
The same lemma gives
\(
S\lesssim3^{D_0}\{D^{m-1}W_\phi^2M_\phi+1\},
\)
\(
W\lesssim3^{D_0}\{D^{m-1}W_\phi+1\},
\)
and \(M\lesssim M_\phi+2D_0\). This proves the asserted global
uniform approximation bound.
\end{proof}

\begin{proof}[Proof of Theorem \ref{lower}]

For each \(D\ge1\), consider the finite-dimensional slice
\(
\Omega_{D}=\prod_{d=1}^D[-s_d,s_d],
\)
embedded into \(\Omega\) by setting all coordinates \(d>D\) equal to zero. 
Any family constructed on this slice can be extended to the full sequence domain by ignoring the tail coordinates. 
Hence the finite-dimensional packing argument can be carried out on \(\Omega_{D}\).
We partition the \(d\)-th coordinate interval into cells of length of order \(L^{-1}\ell_d^{-1}\), where \(L>0\) is a global resolution parameter and \(\ell_d\) is a coordinate-specific grid allocation factor, with \(\ell_1=1\). 
Thus the number of cells in the \(d\)-th coordinate is \(N_d=\lfloor Ls_d\ell_d\rfloor\), provided \(Ls_D\ell_D\ge2\). 
Let \(x_{\bm\beta}\) denote the center of a grid cell, where \(\bm\beta=(\beta_1,\ldots,\beta_D)\) and \(\beta_d\in\{0,\ldots,N_d-1\}\). 
Choose a fixed bump function \(\varphi\in C^m(\mathbb R^D)\) such that \(\varphi(0)=1\), \(\varphi\) is supported in a sufficiently small ball, and
\[
\max_{\|\alpha\|_1\le m}\|\partial^\alpha\varphi\|_\infty\le1.
\]
Define the anisotropic bump
\(
\varphi_{\bm\beta}(x)
=
B_L
\varphi\left(
L\,\operatorname{diag}(\ell_1,\ldots,\ell_D)(x-x_{\bm\beta})
\right),
\)
where $\psi(0)=1$.
For any multi-index \(\alpha\), the chain rule gives
\[
|\partial^\alpha \varphi_{\bm\beta}(x)|
\le
B_L L^{\|\alpha\|_1}
\prod_{d=1}^D\ell_d^{\alpha_d}.
\]
To ensure \(\varphi_{\bm\beta}\in \mathcal C^m(\Omega_{D},r)\), it is enough that
\[
B_L L^{\|\alpha\|_1}
\prod_{d=1}^D
\ell_d^{\alpha_d}
\le
C_0\prod_{d=1}^D w_d^{\alpha_d}.
\]
Equivalently,
\[
B_L\le C_0L^{-\|\alpha\|_1}
\prod_{d=1}^D
(w_d\ell_d^{-1})^{\alpha_d}.
\]
We now determine the optimal choice of the grid allocation \(\ell_d\). 
The derivative constraint requires
\[
B_L\le C_0\min_{1\le \|\alpha\|_1\le m}
L^{-\|\alpha\|_1}
\prod_{d=1}^D
(w_d\ell_d^{-1})^{\alpha_d}.
\]
At the same time, the grid cardinality satisfies
\[
\prod_{d=1}^D N_d
=
L^D\prod_{d=1}^Ds_d\ell_d,
\]
so larger values of \(\ell_d\) lead to a larger shattering set. 
Hence the optimal construction takes \(\ell_d\) as large as permitted by the derivative constraint.

We consider two regimes.

\emph{Case 1: \(\ell_d/w_d\to0\).}
In this case, \(w_d\ell_d^{-1}\) is increasing across \(d\). 
The minimum in the derivative constraint is attained by placing all \(m\) derivatives on the first coordinate. 
Hence \(B_L= C_0L^{-m}\). 
Taking the critical scaling \(B_L=2\varepsilon\) gives \(L=C^{1/m}_0(2\varepsilon)^{-1/m}\). 
The grid cardinality is
\[
\prod_{d=1}^D N_d
=
L^D\prod_{d=1}^Ds_d\ell_d.
\]
The largest possible entropy is achieved at the boundary value \(\ell_d=w_d\).

\emph{Case 2: \(\ell_d/w_d\to\infty\).}
In this case, \(w_d\ell_d^{-1}\) is decreasing across $d$. 
The derivative constraint is now most restrictive at the largest coordinate \(D\). Hence \(B_L= C_0L^{-m}(w_d\ell_D^{-1})^m.\)
Taking \(B_L=2\varepsilon\) gives \(L=C_0^{1/m}(2\varepsilon)^{-1/m}w_d\ell_D^{-1}.\)
Substituting this into the grid cardinality,
\begin{align*}
\prod_{d=1}^D N_d
&=L^D\prod_{d=1}^Ds_d\ell_d=C_0^{D/m}(2\varepsilon)^{-D/m}
w_d^D\ell_D^{-D}\times\prod_{d=1}^Ds_d\ell_d.
\end{align*}
Now fix all coordinates except \(\ell_D\). 
Then the above expression becomes
\[
C_0^{D/m}(2\varepsilon)^{-D/m}
w_d^D
\left(
\prod_{d=1}^{D-1}s_d\ell_d
\right)
s_D\ell_D^{-(D-1)}.
\]
Since \(D\ge2\), the factor \(\ell_D^{-(D-1)}\) is decreasing in \(\ell_D\). 
Therefore the grid cardinality is maximized by taking the smallest admissible value of \(\ell_D\). 
On the other hand, the present regime assumes \(w_d\ell_D^{-1}\le1\), whose boundary is \(\ell_D=w_d\). 
Hence the smallest admissible choice is exactly the boundary value \(\ell_D=w_d\). 
Repeating the same argument coordinatewise yields \(\ell_d=w_d,\ d=1,\ldots,D.\)
Combining the two cases, the optimal grid allocation is \(\ell_d=w_d\), and the derivative constraint reduces to
\(B_L\le C_0L^{-m}.\)
Under the envelope representation \(w_ds_d=\exp\{-\psi(d)\}\), the grid cardinality becomes
\begin{align*}
\prod_{d=1}^D N_d
=L^D\prod_{d=1}^D w_ds_d=L^D\exp\{-\Psi(D)\},\ \Psi(D)=\sum_{d=1}^D\psi(d).
\end{align*}
Taking \(B_L=2\varepsilon\) and maximizing grid cardinality give \(L=C^{1/m}_0(2\varepsilon)^{-1/m}\). 

Define
\[
\mathcal B\coloneq\left\{\chi\colon\prod_{d=1}^D\{0,1,\ldots,L_d-1\}\to\{-1,1\}\right\}.
\]
For each sign pattern \(\chi\in\mathcal B\), define
\[
f_\chi(\bm x)\coloneq\sum_{\bm\beta}\chi(\bm\beta)\psi_{\bm\beta}(\bm x),\qquad \bm x\in\Omega_D.
\]
Since the supports of the bumps are disjoint, each \(f_\chi\) satisfies the anisotropic derivative bounds on \(\Omega_D\). 
By the uniform approximation property on the infinite-dimensional class, for every \(\chi\in\mathcal B\), there exists \(\phi_\chi\in\mathcal F\) such that
\(
\|\phi_\chi-F_\chi\|_{L^\infty(\Omega)}\le\varepsilon.
\)
At the embedded center \(\iota_D(\bm x_{\bm\beta})\), we have
\(
|F_\chi(\iota_D(\bm x_{\bm\beta}))|
=
|f_\chi(\bm x_{\bm\beta})|
=
B_L.
\)
If \(B_L=2\varepsilon\), then
\[
|\phi_\chi(\iota_D(\bm x_{\bm\beta}))-F_\chi(\iota_D(\bm x_{\bm\beta}))|
\le\varepsilon
<
|F_\chi(\iota_D(\bm x_{\bm\beta}))|.
\]
Therefore,
\(
\operatorname{sign}\{\phi_\chi(\iota_D(\bm x_{\bm\beta}))\}
=
\operatorname{sign}\{F_\chi(\iota_D(\bm x_{\bm\beta}))\}
=
\chi(\bm\beta).
\)
Thus \(\{\phi_\chi\colon\chi\in\mathcal B\}\subseteq\mathcal F\) shatters the embedded grid centers
\[
\left\{
x_{\bm\beta}\colon
\bm\beta\in
\prod_{d=1}^D\{0,1,\ldots,N_d-1\}
\right\}.
\]
Consequently,
\begin{align*}
V=\operatorname{PDim}(\mathcal F)
&\ge\prod_{d=1}^D N_d=C_0^{D/m}(2\varepsilon)^{-D/m}
\exp\{-\Psi(D)\}.
\end{align*}
Thus, for every admissible \(D\) satisfying \(\left(\frac{C_0}{2\varepsilon}\right)^{1/m}w_Ds_D\geq1\),
\[
\varepsilon
\ge
\frac{C_0}{2}
\exp\left[
-\frac{m}{D}\{\log V+\Psi(D)\}
\right].
\]
Combining the fact that 
\begin{align*}
&\sup_{f\in\mathcal C^m(\Omega,\bm w)}\inf_{\phi\in\mathcal F}\|\phi-f\|_{L^\infty(\Omega)}\ge \sup_{f\in\mathcal C^m(\Omega_D,\bm w)}\inf_{\phi\in\mathcal F}\|\phi-f\|_{L^\infty(\Omega_D)}
\end{align*}
completes the proof.
\end{proof}

\begin{proof}[Proof of Theorem \ref{thm:exp-approx-rate}]
Put \(a_d=w_ds_d\) and \(B(d)=d\log(2/a_d)-\Psi(d)\), then \(D_0=\max\{d\colon B(d)\leq t\}\). By the assumed uniform decay equivalence, there is a constant \(C_e\), independent of \(d\), \(\rho\), and \(t\), such that
\begin{equation}
\label{eq:exp-uniform-log-envelope}
\left|\log(a_d^{-1})-c_ad^\rho\right|\leq C_e.
\end{equation}
For every \(d\geq1\) and every \(\rho>0\), monotonicity of
\(x^\rho\) and integral comparison give
\begin{equation*}
\label{eq:exp-uniform-power-sum}
\frac{d^{\rho+1}}{\rho+1}
\leq\sum_{k=1}^d k^\rho
\leq\frac{d^{\rho+1}}{\rho+1}+d^\rho.
\end{equation*}
Consequently,
\begin{equation}
\label{eq:exp-uniform-psi-expansion}
\Psi(d)=\frac{c_a}{\rho+1}d^{\rho+1}+R_\Psi(d),\ |R_\Psi(d)|\leq C(c_ad^\rho+d),
\end{equation}
where \(C\) is independent of \(d\), \(t\), and \(\mathcal A\).
Substituting \eqref{eq:exp-uniform-log-envelope} and
\eqref{eq:exp-uniform-psi-expansion} into the definition of \(B(d)\)
gives
\begin{align}
\label{eq:exp-uniform-threshold-expansion}
B(d)=\frac{\rho c_a}{\rho+1}d^{\rho+1}+R_B(d),\ |R_B(d)|\leq C(c_ad^\rho+d).
\end{align}
By the definition of \(D_0\) in Theorem~\ref{upper},
\begin{equation}
\label{eq:exp-uniform-threshold-bracket}
B(D_0)\leq t<B(D_0+1).
\end{equation}
Combining \eqref{eq:exp-uniform-threshold-expansion} and
\eqref{eq:exp-uniform-threshold-bracket} yields the uniform
budget--dimension bracket
\begin{align}
 t&\geq
\frac{\rho c_a}{\rho+1}D_0^{\rho+1}
-C(c_aD_0^\rho+D_0)
\nonumber\\
t&<\frac{\rho c_a}{\rho+1}(D_0+1)^{\rho+1}+C\{c_a(D_0+1)^\rho+D_0+1\}.
\label{eq:exp-uniform-budget-bracket}
\end{align}
Set
\(G_{t,\rho}=D_0^{-1}\{t+\Psi(D_0)\}-\log D_0\). By
Theorem~\ref{upper},
\begin{align}
\inf_{f\in\mathcal F}\|f-f_0\|_{L^\infty}
&\lesssim \exp(-mG_{t,\rho})\nonumber\\
&\lesssim \exp[-m\{G_{t,\rho}-\Psi(1)\}].
\label{eq:exp-unified-effective-exponent}
\end{align}
where the last implicit constant is independent of \(t\) and \(\rho\).
By the definition of \(G_{t,\rho}\), the numerator in
\eqref{remainder} is \(G_{t,\rho}-\Psi(1)\). Hence
\begin{align*}
G_{t,\rho}-\Psi(1)
={}&
c_a^{1/(\rho+1)}
\left\{\frac{t(\rho+1)}{\rho}\right\}^{\rho/(\rho+1)}
(1+\delta_{\mathcal A,\rho}).
\end{align*}
Multiplying both sides by \(m\) and recalling that
\begin{align*}
c'
=
mc_a^{1/(\rho+1)}
\left(\frac{\rho+1}{\rho}\right)^{\rho/(\rho+1)},
\end{align*}
we obtain
$m\{G_{t,\rho}-\Psi(1)\}=c't^{\rho/(\rho+1)}
(1+\delta_{\mathcal A,\rho})$.
Therefore,
\begin{align*}
\exp[-m\{G_{t,\rho}-\Psi(1)\}]
=
\exp\{-c't^{\rho/(\rho+1)}
(1+\delta_{\mathcal A,\rho})\}.
\end{align*}
Substituting this identity into
\eqref{eq:exp-unified-effective-exponent} yields
\eqref{eq:exp-uniform-explicit-rate}.

 It remains only to verify the two asserted limits of $\delta_{\mathcal A}$. First fix \(\rho>0\) and let \(t\to\infty\). Let
\(x_{\rho,t}=\{(\rho+1)t/(\rho c_a)\}^{1/(\rho+1)}\) and
\(F_\rho(x)=\rho c_ax^{\rho+1}/(\rho+1)\), so that
\(F_\rho(x_{\rho,t})=t\). Since \(\rho\) is fixed and
\(x_{\rho,t}\to\infty\),
\eqref{eq:exp-uniform-budget-bracket} and the mean-value theorem imply
\begin{equation}
\label{eq:exp-uniform-dimension-inversion}
D_0^\rho=x_{\rho,t}^\rho(1+\eta_{\rho,t}),
\
|\eta_{\rho,t}|\lesssim 
\frac{\rho+1}{x_{\rho,t}}
+\frac{1}{c_ax_{\rho,t}^\rho}.
\end{equation}
Taking
\eqref{eq:exp-uniform-psi-expansion},
\eqref{eq:exp-uniform-budget-bracket}, and
\eqref{eq:exp-uniform-dimension-inversion} together gives
\begin{align}
G_{t,\rho}-\Psi(1)
={}&c_ax_{\rho,t}^\rho
\Bigg\{1+O\Bigg(
\frac{\rho+1}{x_{\rho,t}}+\frac{1+|\log x_{\rho,t}|}{c_ax_{\rho,t}^\rho}
\Bigg)\Bigg\}.
\label{eq:exp-uniform-effective-exponent-expansion}
\end{align}
Since
\(mc_ax_{\rho,t}^\rho
=mc_a^{1/(\rho+1)}\{(\rho+1)/\rho\}^{\rho/(\rho+1)}
t^{\rho/(\rho+1)}\), the definition in \eqref{remainder} and
\eqref{eq:exp-uniform-effective-exponent-expansion} give
\begin{align*}
\delta_{\mathcal A,\rho}
=O\left\{
\frac{\rho+1}{x_{\rho,t}}
+\frac{1+|\log x_{\rho,t}|}{c_ax_{\rho,t}^\rho}
\right\}
=o(1)
\end{align*}
as \(t\to\infty\) for every fixed \(\rho>0\). 

We next fix \(t\geq\log2\) and let \(\rho\to\infty\). Write
\(e_d=\log(a_d^{-1})-c_ad^\rho\). By
\eqref{eq:exp-uniform-log-envelope}, \(|e_d|\leq C_e\), and hence
\begin{align}
B(d)={}&d\log2
+c_a\left\{d^{\rho+1}-\sum_{k=1}^d k^\rho\right\}
+de_d-\sum_{k=1}^d e_k.
\label{eq:large-rho-threshold}
\end{align}
For \(d\geq2\) and \(\rho\geq1\), the mean-value theorem gives
\begin{align*}
d^{\rho+1}-\sum_{k=1}^d k^\rho
&=\sum_{k=1}^{d-1}(d^\rho-k^\rho)\\
&\geq(d-1)\{d^\rho-(d-1)^\rho\}\\
&\geq\rho(d-1)^\rho
\geq\frac{\rho d}{2}.
\end{align*}
Since \(|de_d-\sum_{k=1}^d e_k|\leq2C_ed\),
\eqref{eq:large-rho-threshold} implies
\begin{equation}
B(d)\geq
d\left(\log2+\frac{c_a\rho}{2}-2C_e\right),
\qquad d\geq2.
\label{eq:large-rho-threshold-lower}
\end{equation}
For fixed \(t\), the right-hand side exceeds \(t\) for all \(d\geq2\)
once \(\rho\) is sufficiently large. On the other hand,
\(B(1)=\log2\leq t\). The definition of \(D_0\) therefore gives
\(D_0=1\). Consequently,
\(G_{t,\rho}-\Psi(1)=t\). At the same time,
\begin{equation}
\label{eq:exp-large-rho-continuous-exponent}
c_a^{1/(\rho+1)}
\left(\frac{\rho+1}{\rho}\right)^{\rho/(\rho+1)}
t^{\rho/(\rho+1)}\longrightarrow t.
\end{equation}
Therefore, \eqref{remainder} gives
\begin{align*}
1+\delta_{\mathcal A,\rho}
&=
\frac{t}
{c_a^{1/(\rho+1)}
\{(\rho+1)/\rho\}^{\rho/(\rho+1)}
t^{\rho/(\rho+1)}}\longrightarrow1.
\end{align*}
Thus \(\delta_{\mathcal A,\rho}\to0\) as \(\rho\to\infty\) with
\(t\) fixed. 
This completes the proof.
\end{proof}

\begin{lemma}[Proposition 4.3 in \citet{lu2021deep}]\label{lem:B1}
For any $W,M \in \mathbb{N}_+$ and $\delta_d \in (0,3K^{-1}]$ with $K = W^2M^2$, there exists a one-dimensional function $\phi$ implemented by a ReLU fully connected neural network with width $6W$ and depth $3M$ such that
\[
\phi(x) = j, \qquad
\text{if } x \in \Bigl[ \frac{j}{K},\frac{j+1}{K} - \delta \cdot \mathbbm{1}_{\{j < K-1\}} \Bigr],
\]
for $j = 0,1,\ldots,K-1$.
\end{lemma}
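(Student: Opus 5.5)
This is Proposition 4.3 of \citet{lu2021deep}, so one option is simply to cite it. Here is how I would reprove it. The natural approach uses the identity $\mathcal A^2=W^2M^2$: encode the step function $x\mapsto\lfloor Kx\rfloor$ (with small transition zones of width $\delta$) by composing a ``coarse'' step network of width $O(W)$ with a ``fine'' step network, using the depth to multiply the number of representable breakpoints.

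\textbf{Building blocks.} A one-hidden-layer ReLU network of width $O(W)$ can realize any continuous piecewise-linear function with $O(W)$ breakpoints. Hence a width-$O(W)$, depth-$O(1)$ network realizes a staircase with $W$ flat steps and linear ramps of width at most $\delta$ between them. Taking $\delta\le 3K^{-1}$ ensures the ramps fit inside the prescribed gaps, since the ramps may be placed in $[\tfrac{j+1}{K}-\delta,\tfrac{j+1}{K}]$.

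\textbf{Two-level decomposition.} Write $j=j_1WM^2+j_2$ with $j_1\in\{0,\dots,W-1\}$ and $j_2\in\{0,\dots,WM^2-1\}$. First, a shallow staircase $\phi_1$ of width $O(W)$ outputs $j_1$ on the coarse blocks of length $WM^2/K=1/W$. Second, forming $x-j_1/W$ reduces the input to $[0,1/W)$, and the task becomes computing $\lfloor Kx\rfloor$ for $WM^2$ fine levels. For this fine stage, the standard trick is a bit-extraction / point-fitting construction: a depth-$O(M)$, width-$O(W)$ network realizes a staircase with $WM^2$ steps. This is done by iterating $M$ layers, each of which refines by a factor $M$ using a sawtooth-type residual, while carrying the partial index forward in a dedicated ReLU channel, since ReLU preserves nonnegative quantities. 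The output is then $j_1WM^2+\phi_2(\cdot)$. The widths add, giving at most $6W$, and the depths add, giving at most $3M$.

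\textbf{Main obstacle.} The delicate step is the fine stage. One must ensure that each depth increment truly multiplies the number of flat pieces, which requires that the residual $x-(\text{current index})/K'$ be exactly piecewise linear with slopes keeping the ramps inside the trifling gaps. To verify this, I would check by induction on the layer $\ell$ that after $\ell$ blocks the partial output equals the correct index on every interval $[\tfrac{j}{K},\tfrac{j+1}{K}-\delta]$, and that no ramp enters these intervals. The bookkeeping of the constants $6$ and $3$ follows the explicit counts in \citet{lu2021deep}.
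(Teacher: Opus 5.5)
The paper does not prove this lemma; it imports it verbatim from \citet{lu2021deep}. Your opening option, simply citing Proposition~4.3 there, is therefore exactly what the paper does.

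Your self-contained reconstruction, however, has a genuine gap at the step you flag as the main obstacle. The coarse stage and the reduction to the residual $x-j_1/W$ are fine. The fine stage, though, needs a width-$O(W)$, depth-$O(M)$ network that is exactly constant on $WM^2$ subintervals, and the mechanism you describe cannot produce it. Iterating $M$ layers that each refine by a factor $M$ would give $M^M$ levels, not $WM^2$. More seriously, for a scalar-input network a hidden layer of width $n$ can multiply the number of linear pieces by at most $n+1$, since each neuron adds at most one breakpoint per existing piece. So a width-$O(W)$ layer cannot refine by a factor $M$ once $M\gg W$. Bit extraction and point fitting are also the wrong tools here. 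They underlie Lemma~\ref{lem:B2}, which fits prescribed values only approximately and only at integer inputs, whereas you need exact integer outputs on whole intervals. As a result, the count $K=W^2M^2$ at width $O(W)$ and depth $O(M)$, which is the entire content of the lemma, is asserted rather than derived.

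The missing idea is that, within a stage, depth should add $O(W)$ breakpoints per layer. The multiplicative gain should come only from composing two stages through the residual. A staircase with $WM$ levels and ramps of width $\delta$ can be realized exactly with width $2W+2$ and depth $M$. Each hidden layer carries the input and the running count forward, and adds $W$ new ramps $\delta^{-1}\{\sigma(x-a+\delta)-\sigma(x-a)\}$. Both carried quantities are nonnegative on the relevant set, so ReLU passes them unchanged.

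Let $\phi_1$ be such a staircase with jumps at $a=i/(WM)$, and $\phi_2$ one with jumps at $a=i/K$, for $i=1,\ldots,WM-1$. Set $\phi(x)=WM\,\phi_1(x)+\phi_2\bigl(x-\phi_1(x)/(WM)\bigr)$. Every ramp of $\phi_1$ lies in some gap $[(j+1)/K-\delta,(j+1)/K]$. Hence, for $x\in[j/K,(j+1)/K-\delta]$ with $j=j_1WM+j_2$, you get $\phi_1(x)=j_1$, the residual lies in $[j_2/K,(j_2+1)/K-\delta]$, and $\phi(x)=j$. The total width is $2W+2\le 6W$ and the depth is $2M\le 3M$. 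Your $W\times WM^2$ split can be rescued the same way by doing its fine stage as $WM\times M$.

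Two smaller corrections:
\begin{itemize}
\item In a serial composition the depths add but the widths do not; the width is the maximum over stages plus the carry channels.
\item The range $\delta\in(0,3K^{-1}]$ in the statement is evidently a transcription of $(0,\tfrac{1}{3K}]$ in \citet{lu2021deep}. Your claim that $\delta\le 3K^{-1}$ lets the ramps fit is false for $\delta>1/K$. In that range the intervals with $j<K-1$ are empty, so the statement is vacuous there.
\end{itemize}
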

\begin{lemma}[Proposition 4.4 in \citet{lu2021deep}]\label{lem:B2}
Given any $W,M,s \in \mathbb{N}_+$ and $\xi_i \in [0,1]$ for
$i = 0,1,\ldots,W^{2}M^{2}-1$, there exists a function $\phi$ implemented by a ReLU fully
connected neural network with width $16s(W+1)\bigl\lceil \log_2(8W) \bigr\rceil$ and depth $5(M+2) \bigl\lceil \log_2(4M) \bigr\rceil$
such that
\[
\bigl| \phi(i) - \xi_i \bigr| \le W^{-2s} M^{-2s},
\qquad i = 0,1,\ldots,W^{2}M^{2}-1,
\]
and $0 \le \phi(x) \le 1,\ \forall\, x \in \mathbb{R}$.
\end{lemma}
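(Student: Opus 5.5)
The statement is Lemma~\ref{lem:B2}, the point-fitting result of \citet{lu2021deep}. I would prove it by bit extraction: approximate each target value by a truncated binary expansion, store the bits in compact form, and read them out with a deep but narrow network.

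\textbf{Step 1 (binary truncation).} Set $K=W^2M^2$. Each $\xi_i$ is replaced by its truncation to $J=\lceil 2s\log_2(WM)\rceil+1$ binary digits. This step costs at most $2^{-J}\le \tfrac12 W^{-2s}M^{-2s}$, so it suffices to fit the truncated values exactly, or to within another $\tfrac12 W^{-2s}M^{-2s}$.

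\textbf{Step 2 (split the digits into $s$ blocks).} Write each truncated value as $\sum_{r=1}^{s}2^{-(r-1)J/s}\,y_{i,r}$, where $y_{i,r}$ collects the $r$-th block of about $2\log_2(WM)$ bits. The $s$ blocks are fitted by $s$ parallel subnetworks, which accounts for the factor $s$ in the width. So the problem reduces to $s=1$: fit $K$ numbers, each with $n\approx 2\log_2(WM)$ bits, exactly at the integers $i=0,\ldots,K-1$.

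\textbf{Step 3 (index splitting).} Write $i=jM'+k$ with $M'\asymp M^2$ and $j<W^2$.
\begin{itemize}
\item[(a)] A first width-$O(W)$ subnetwork, built from the step-function construction of Lemma~\ref{lem:B1} or a direct sawtooth construction, computes $j$ from $i$ and hence $k=i-jM'$.
\item[(b)] A second width-$O(W)$ subnetwork maps $j$ to two reals $a_j$ and $b_j$. Here $a_j$ encodes, bit-concatenated, all $M'\cdot n$ bits of the entries in row $j$. The map $j\mapsto a_j$ is a piecewise-linear interpolation over $W^2$ points, which a width-$O(W)$, depth-$O(1)$ network realizes (two layers with $O(W)$ breakpoints each, composed).
\item[(c)] A bit-extraction network of width $O(\log W)$ and depth $O(M\log M)$ then reads off the $k$-th block of bits of $a_j$. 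It does so by iterating the map $x\mapsto (2x \bmod 1,\ \lfloor 2x\rfloor)$. Each iteration is realized exactly by ReLU layers, because the extracted bits lie at a positive distance from the discontinuities. Each iteration also accumulates the bit into the output only when the running counter equals $k$, and this gating is a ReLU product of a bit with an indicator.
\end{itemize}
This is where the $\lceil\log_2(8W)\rceil$ and $\lceil\log_2(4M)\rceil$ factors enter.

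\textbf{Step 4 (clipping).} Finally, apply $\min\{\max\{\cdot,0\},1\}$ to the output. This ensures $0\le\phi\le1$ on all of $\mathbb R$ and does not increase the error at the nodes.

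\textbf{Main obstacle.} The hard part is Step 3(c): making the bit extraction exact with only width $O(\log W)$ while keeping the depth $O(M\log M)$ and not $O(M^2)$. My first attempt would handle all $M'\approx M^2$ positions in two levels. The first level would extract a super-block of $M$ entries indexed by $\lfloor k/M\rfloor$. The second level would then extract the needed entry from that super-block. Each level runs $O(M)$ iterations, and each iteration uses $O(\log M)$ depth for the counter comparison. Verifying the precise constants $16s(W+1)$ and $5(M+2)$ is bookkeeping, and I would defer it to \citet{lu2021deep}.
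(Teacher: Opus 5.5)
The paper does not prove this lemma; it quotes it from \citet{lu2021deep}, so the comparison here is with their argument. Your Steps 1, 2 and 4 are fine. Step 3, however, cannot work with the budget you assign, and the two-level repair in your last paragraph does not save it.

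You make each real $a_j$ carry $M'n\asymp M^2\log_2(WM)$ bits. A fixed, data-independent reader of width $O(\log W)$ and depth $O(M\log M)$ is then supposed to return the block at an arbitrary position $k<M'$. Freeze $k=M'-1$. The reader becomes a univariate ReLU network, so it has at most $(\text{width}+1)^{\text{depth}}=2^{O(M\log M\log\log W)}$ linear pieces. But returning the last $n$-bit block of every $M'n$-bit string forces about $2^{(M'-1)n}$ oscillations. You would therefore need $\text{depth}\times\log(\text{width})\gtrsim M^2\log(WM)$, which fails for large $M$. Any two-level scheme inside the same width and depth obeys the same count. Concretely, the doubling map reads one bit per constant-depth block, so reaching block $k$ costs about $kn$ layers. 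Your super-block level would have to compute $\lfloor 2^{Mn}x\rfloor$ in one iteration, and that is itself a step function with $2^{Mn}$ steps. The same count shows that a real read by such a reader can hold only $O(M\log M\log\log W)$ bits. So even rows of $M$ entries with $n\asymp\log(WM)$ bits each are too much once $W\gg M$; the $\asymp\log W$ binary digits of each value cannot be stored along depth at all.

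The missing idea is to fit one binary digit plane at a time and to spend width on the number of planes. For a fixed digit $r\le J$, the $0/1$ sequence $(\theta_{i,r})_{i<W^2M^2}$ is fitted exactly by a network of width $O(W)$ and depth $O(M)$, as follows. Write $i=qM+p$ with $q<W^2M$ and $p<M$, and obtain $q$ from a step-function network of the type in Lemma~\ref{lem:B1}. Map $q$ to $y_q=\sum_{u=0}^{M-1}2^{-u-1}\theta_{qM+u,r}$ by a piecewise-linear interpolant with $W^2M$ pieces. This needs width $O(W)$ \emph{and} depth $O(M)$, so the encoder uses depth too, unlike your depth-$O(1)$ encoder. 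Finally, read bit $p$ of $y_q$ with a constant-width reader of depth $O(M)$, e.g.\ through the partial sums $\sum_{u\le p}\theta_{qM+u,r}$ and a difference. Rows are thus short ($M$ single bits) and numerous ($W^2M$), the opposite of your $W^2$ rows of $M^2$ multi-bit entries. The $J\asymp 2s\log_2(WM)$ digit networks are then placed $O(s\log W)$ side by side and $O(\log M)$ in series. The index $i$ and the running sum $\sum_r2^{-r}\phi_r(i)$ are carried along identity channels. This grid, not the width and depth of a single reader, produces the factors $\lceil\log_2(8W)\rceil$ and $\lceil\log_2(4M)\rceil$.
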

\begin{lemma}[Lemma 4.2 in \citet{lu2021deep}]\label{lem:B3}
For any $W,M \in \mathbb{N}_+$ and $a,b \in \mathbb{R}$ with $a<b$, there exists a function
$\phi$ implemented by a ReLU fully connected neural network with width $9W+1$ and
depth $M$ such that $\bigl| \phi(x,y) - xy \bigr|
\le 6(b-a)^2W^{-M},\ \forall\, x,y \in [a,b]$.
\end{lemma}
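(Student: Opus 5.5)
The bound will come from the classical Yarotsky-type construction, following the argument behind Lemma 4.2 of \citet{lu2021deep}. First I reduce to the unit square. I set $\tilde x=(x-a)/(b-a)$ and $\tilde y=(y-a)/(b-a)$, both in $[0,1]$. Then $xy=(b-a)^2\tilde x\tilde y+a(b-a)(\tilde x+\tilde y)+a^2$, and the affine part can be computed exactly by a ReLU network. So it suffices to approximate $\tilde x\tilde y$ on $[0,1]^2$ with error $6W^{-M}$; the factor $(b-a)^2$ then appears exactly as in the statement. I then write the product by polarization,
\[
\tilde x\tilde y=2\Bigl(\tfrac{\tilde x+\tilde y}{2}\Bigr)^2-\tfrac12\tilde x^2-\tfrac12\tilde y^2 ,
\]
in which all three arguments lie in $[0,1]$. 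This reduces the problem to approximating the univariate square $u\mapsto u^2$ on $[0,1]$.

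For the square I use the piecewise-linear interpolant $f_k$ of $u^2$ on the uniform grid of mesh $N^{-k}$, with $N$ of order $W$. The standard interpolation estimate gives $\|f_k-u^2\|_{L^\infty[0,1]}\le N^{-2k}/4$. The key identity is that $f_{k}-f_{k+1}=N^{-2k}\,T\bigl(\{N^k u\}\bigr)$ for a fixed nonnegative piecewise-linear profile $T$. This holds because the interpolation error of a quadratic is self-similar under the map $u\mapsto N^k u \bmod 1$. The fractional-part map $u\mapsto\{Nu\}$ is a sawtooth with $N$ teeth, which one hidden layer of width about $2N$ (at most $3W$) realizes exactly. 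Composing such layers computes $\{N^k u\}$ at depth $k$. I therefore build a depth-$M$ network with three kinds of channels: a block of width $3W$ that iterates the sawtooth, one channel that accumulates $u-\sum_{j<k}N^{-2j}T(\{N^j u\})$ (this quantity is nonnegative, or can be shifted to be so, and hence passes through ReLU), and the channels feeding $T$. This realizes $f_M$ with error $N^{-2M}/4\le W^{-M}$.

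Next I run three copies of this square network in parallel, on $(\tilde x+\tilde y)/2$, on $\tilde x$ and on $\tilde y$. Their sawtooth blocks have total width $9W$, and a single accumulator channel carries the combined linear combination $2(\cdot)-\tfrac12(\cdot)-\tfrac12(\cdot)$ from layer to layer, giving width $9W+1$ and depth $M$. The total error is at most $(2+\tfrac12+\tfrac12)W^{-M}\cdot\tfrac14\cdot$const, which is at most $6W^{-M}$. Multiplying by $(b-a)^2$ gives the claimed bound.

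I expect the main obstacle to be the bookkeeping that makes the width exactly $9W+1$ rather than $O(W)$ while keeping the depth exactly $M$. This requires three things: realizing the $N$-tooth sawtooth and the profile $T$ within $3W$ neurons per square per layer, merging the three accumulators into one sign-safe channel (adding a large enough constant offset to keep it nonnegative), and absorbing the input and output affine maps into the first and last layers. I would first check the self-similarity identity for $f_k-f_{k+1}$ with general base $N$, since that identity is what reduces the construction to width proportional to $W$.
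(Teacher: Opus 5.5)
Your plan has one step that fails as written: the claim that one hidden layer realizes the fractional-part map $u\mapsto\{Nu\}$ exactly. That map jumps from $1$ to $0$ at every $u=j/N$, while every ReLU network computes a continuous function. So $\{N^ku\}$ cannot be produced and passed from layer to layer as you describe.

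The rest of the plan is sound. It follows the same skeleton as the proof in \citet{lu2021deep}, which the paper cites without reproducing:
\begin{itemize}
\item a squaring network of width proportional to $W$, built from sawtooth iterates;
\item three copies combined by polarization;
\item one nonnegative carry channel for the affine part, which is what the $+1$ in $9W+1$ pays for.
\end{itemize}
Your base-$N$ grid with one refinement level per layer is a legitimate alternative to the dyadic sawtooth construction.

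The missing idea is to replace the fractional part by the continuous $N$-lap tent map $g$. It equals $Nt-j$ on $[j/N,(j+1)/N]$ for even $j$ and $j+1-Nt$ for odd $j$. On $[0,1]$ it is a linear combination of the $N$ neurons $\max\{0,\,t-j/N\}$, $j=0,\ldots,N-1$.

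By induction, $g^{\circ k}(u)\in\{\{N^ku\},\,1-\{N^ku\}\}$. Your profile $T=f_0-f_1$ satisfies $T(0)=T(1)=0$ and $T(1-t)=T(t)$. The symmetry holds because the nodes $j/N$ are symmetric under $t\mapsto 1-t$ and interpolation reproduces affine functions, so $f_i(1-t)-f_i(t)=1-2t$ for $i=0,1$. Hence $T(g^{\circ k}(u))=T(\{N^ku\})$. Your self-similarity identity then becomes $f_k-f_{k+1}=N^{-2k}\,T(g^{\circ k}(u))$, with an argument a network can actually compute.

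Since $T$ and $g$ have the same breakpoints $j/N$, both $T(t_k)$ and $t_{k+1}=g(t_k)$ are linear read-outs of the same $N$ neurons $\max\{0,\,t_k-j/N\}$. Your offset accumulator absorbs $T(t_k)$ one layer later. With $N=W$, three such blocks plus the single accumulator give:
\begin{itemize}
\item width $3W+1\le 9W+1$;
\item depth $M$;
\item error $\tfrac34(b-a)^2W^{-2M}\le 6(b-a)^2W^{-M}$.
\end{itemize}
So the width bookkeeping you flagged is not the real obstacle. The real issue is keeping the state variable passed between layers continuous.
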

\begin{lemma}[Proposition 4.1 in \citet{lu2021deep}]\label{lem:B4}
Assume $P_{{\bm{\alpha}}}(x) = x^{\bm{{\bm{\alpha}}}} = x_1^{{\bm{\alpha}}_1} x_2^{{\bm{\alpha}}_2} \cdots x_d^{\alpha_d}$ for
$\bm{{\bm{\alpha}}} \in \mathbb{N}_0^d$ with $\|\bm{{\bm{\alpha}}}\|_1 \le s$, where $s \in \mathbb{N}_+$. For any $W,M \in \mathbb{N}_+$, there exists a function $\phi$ implemented by a ReLU fully connected neural network with width $9W + s - 1$ and depth $7s^2 M$ such that
\[
\bigl| \phi(x) - P(x) \bigr|\le 9sW^{-7sM},\qquad \forall\, x \in [0,1]^d.
\]
\end{lemma}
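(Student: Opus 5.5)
The plan is to build the monomial network by chaining copies of the product network of Lemma~\ref{lem:B3}, and to track how the error accumulates along the chain. Let $k=\|\bm\alpha\|_1\le s$. If $k=0$, $P_{\bm\alpha}\equiv1$ is a constant network. If $k=1$, $P_{\bm\alpha}(x)=x_i=\rho(x_i)$ on $[0,1]^d$ is realized exactly. For $k\ge2$, list the factors with repetition as $x_{i_1},\ldots,x_{i_k}$, so that $P_{\bm\alpha}(x)=\prod_{j=1}^k x_{i_j}$. Define the exact partial products $p_1=x_{i_1}$ and $p_j=p_{j-1}x_{i_j}$. Their network counterparts are
\[
\hat p_1=x_{i_1},\qquad \hat p_j=\mathrm{clip}\bigl(\phi_\times(\hat p_{j-1},x_{i_j})\bigr),\qquad j=2,\ldots,k,
\]
where $\phi_\times$ is the network of Lemma~\ref{lem:B3} on $[a,b]=[0,1]$ with width $9W+1$ and depth $7sM$. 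The map $\mathrm{clip}(u)=\min\{\max\{u,0\},1\}=\rho(u)-\rho(u-1)$ is exactly ReLU-representable.

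\textbf{Error accumulation.} Clipping keeps every $\hat p_{j-1}$ in $[0,1]$, so Lemma~\ref{lem:B3} applies at every stage with error at most $6W^{-7sM}$. Clipping does not increase the error, because $p_j\in[0,1]$ and clip is $1$-Lipschitz. Since $|x_{i_j}|\le1$, the map $u\mapsto u\,x_{i_j}$ is $1$-Lipschitz. Writing $e_j=|\hat p_j-p_j|$, this gives
\[
e_j\le |\hat p_{j-1}-p_{j-1}|\,|x_{i_j}|+6W^{-7sM}\le e_{j-1}+6W^{-7sM},\qquad e_1=0.
\]
Hence $e_k\le 6(s-1)W^{-7sM}\le 9sW^{-7sM}$, which is the claimed bound.

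\textbf{Architecture count.} The chain uses at most $s-1$ multiplication blocks, each of depth $7sM$ plus $O(1)$ clipping layers. The total depth is therefore at most $(s-1)(7sM+2)\le 7s^2M$, using $7sM\ge 7s\ge 2(s-1)$. Any slack is filled with identity layers; these are exact on nonnegative signals via $\rho$. For the width, while block $j$ is computing, the network must also carry the not-yet-used inputs $x_{i_{j+1}},\ldots,x_{i_k}$. Because inputs lie in $[0,1]$, they pass through ReLU layers unchanged. At most $s-2$ such carried channels are needed beyond the two arguments already fed into $\phi_\times$. The width is then at most $9W+1+(s-2)=9W+s-1$. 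If a block consumes a carried channel as its input $x_{i_j}$, that channel can be reused, so the count does not grow.

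\textbf{Main obstacle.} The delicate point is the bookkeeping that keeps Lemma~\ref{lem:B3} applicable on $[0,1]$ at every stage without losing the constant. I would first verify that $\rho(u)-\rho(u-1)$ can be merged into the last affine layer of each product block, so that clipping costs at most one extra hidden layer of width $2$, which is absorbed in the width budget. I would then check that the worst-case depth inequality $(s-1)(7sM+1)\le 7s^2M$ holds for all $s,M\ge1$. If it failed for tiny $s$, the fallback is to take the product blocks of depth $7sM-1$. Their error $6W^{-7sM+1}$ is still below $9sW^{-7sM}$ only when $W$ is small, so the clipping must be merged into the block in that case.
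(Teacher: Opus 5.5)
Your proof is correct and is essentially the standard argument behind the cited Proposition~4.1 of \citet{lu2021deep} (the paper gives no proof of its own): chain at most $s-1$ copies of the product network of Lemma~\ref{lem:B3} with depth parameter $7sM$, let the error grow additively by one per-step error because each multiplier lies in $[0,1]$, and carry the not-yet-used inputs through ReLU layers to reach width $9W+s-1$. The only difference is how the intermediate products are kept in the domain of the product network. Lu et al.\ apply it on $[-0.1,1.1]$ (whence $6\cdot 1.2^2\le 9$) and check that the accumulated error stays below $0.1$, whereas you clip to $[0,1]$ at the cost of one hidden layer per step. That extra layer always fits, since $(s-1)(7sM+1)\le 7s^2M$ for all $s,M\ge 1$, so the fallback in your last paragraph is never needed.
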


\begin{lemma}[Anisotropic removal of the trifling region]
\label{lem:remove-trifling-region}
Let \(g\in C(\Omega_{D_0})\). Suppose that
\(0<\delta_d\leq2s_d/(3L_d)\) for
\(d=1,\ldots,D_0\), and let \(\widetilde f\) be a ReLU network of
size \(\widetilde S\), width \(\widetilde W\), and depth
\(\widetilde M\), defined on \(\Omega_{D_0}\). If
\(
|\widetilde f(\bm x^{({D_0})})-g(\bm x^{({D_0})})|\leq\varepsilon
\)
for every \(\bm x^{({D_0})}\in\Omega_D\setminus\mathcal R_{D_0}\), then there exists a ReLU network \(f^\sharp\) on \(\Omega_{D_0}\) such that
\[
\|f^\sharp-g\|_{L^\infty(\Omega_{D_0})}
\leq
\varepsilon+
\sum_{d=1}^{D_0}\omega_{g,d}(\delta_d).
\]
Moreover, \(f^\sharp\) may be chosen with width at most
\(3^{D_0}(\widetilde W+4)\), depth at most
\(\widetilde M+2D_0\), and size 
\(S\lesssim 3^{D_0}\{\widetilde S+1\}\).
If \(g=f_0^{({D_0})}\), then Assumption~\ref{ass2} gives
\(\omega_{f_0^{({D_0})},d}(\delta_d)\leq C_0w_d\delta_d\), and therefore
\[
\|f^\sharp-f_0^{({D_0})}\|_{L^\infty(\Omega_{D_0})}
\leq
\varepsilon+C_0\sum_{d=1}^{D_0}w_d\delta_d.
\]
\end{lemma}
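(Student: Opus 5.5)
The plan follows the standard trifling-region removal of \citet{lu2021deep}, made anisotropic and applied one coordinate at a time. For each $d\le D_0$ we work with the $D_0$ shifted networks obtained by translating the $d$-th input. Set $\tilde f_{d,j}(\bm x)=\tilde f(\bm x+j\delta_d\bm e_d)$ for $j\in\{-1,0,1\}$. Since $\delta_d\le 2s_d/(3L_d)$, the bad strips $\mathcal I_d$ have width $\delta_d$, while each cell has length $2s_d/L_d\ge3\delta_d$. Hence, for any $\xi_d$, at least two of the three points $\xi_d-\delta_d,\xi_d,\xi_d+\delta_d$ lie outside $\mathcal I_d$. Near the endpoints $\pm s_d$ we clip the shifted input back into $[-s_d,s_d]$ with two ReLUs; this adds the constant widths that appear in the bound.

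The combining operator is the middle value. We use
\[
\mathrm{mid}(a,b,c)=a+b+c-\max\{a,b,c\}-\min\{a,b,c\},
\]
and $\max,\min$ of three numbers are exact ReLU networks of width $O(1)$ and depth $2$.

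The key step is the one-coordinate lemma. Suppose $h$ satisfies $|h-g|\le\eta$ at every point whose $d$-th coordinate avoids $\mathcal I_d$ and whose other coordinates lie in some good set $G$. Then
\[
h^{(d)}(\bm x)=\mathrm{mid}\bigl(h(\bm x-\delta_d\bm e_d),\,h(\bm x),\,h(\bm x+\delta_d\bm e_d)\bigr)
\]
satisfies $|h^{(d)}-g|\le\eta+\omega_{g,d}(\delta_d)$ on the set where only the other coordinates are restricted to $G$. The argument is short. At least two of the three values are within $\eta$ of $g$ at points $\delta_d$-close to $\bm x$ along $\bm e_d$, so they lie in $[g(\bm x)-\eta-\omega_{g,d}(\delta_d),\,g(\bm x)+\eta+\omega_{g,d}(\delta_d)]$. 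The median of three numbers, two of which lie in an interval, also lies in that interval.

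Iterating for $d=1,\dots,D_0$ removes all constraints, starting from the hypothesis on $\Omega_{D_0}\setminus\mathcal R_{D_0}$, which is exactly the product of the good sets. After the $D_0$-th step we have
\[
|f^\sharp-g|\le\varepsilon+\sum_{d=1}^{D_0}\omega_{g,d}(\delta_d)
\]
everywhere on $\Omega_{D_0}$. The moduli add because each step adds one modulus term.

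For complexity, the $d$-th step triples the number of parallel copies of the previous network and appends a depth-$2$ median block. After $D_0$ steps this gives $3^{D_0}$ parallel copies of $\tilde f$, with the shifted inputs composed with clipping, followed by $D_0$ median layers. So the width is at most $3^{D_0}(\tilde W+4)$, the $+4$ absorbing the input clipping and carried identities. The depth is $\tilde M+2D_0$ and the size is $\lesssim3^{D_0}(\tilde S+1)$.

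The final claim uses the mean value theorem along $\bm e_d$ together with Assumption~\ref{ass2} for $\|\bm\alpha\|_1=1$. This gives $|\partial_d f_0^{(D_0)}|\le C_0w_d$, hence $\omega_{f_0^{(D_0)},d}(\delta_d)\le C_0w_d\delta_d$.

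The main obstacle is bookkeeping at the boundary. Clipping the shifted point may move it into another bad strip or collapse two evaluations onto the same point. We handle this by checking directly that for $\xi_d$ near $\pm s_d$ the unshifted point is already good: the first and last cells have no strip at their outer edge. The remaining care is that median evaluations for coordinate $d$ are taken at points whose other coordinates are arbitrary, which is why the induction is phrased with the good set $G$ shrinking one coordinate at a time.
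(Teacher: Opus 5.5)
Your proposal is correct and follows the paper's proof essentially step for step: the coordinatewise median extension of Theorem~2.1 in \citet{lu2021deep}, shifts by $\pm\delta_d$, the observation that $\delta_d\le 2s_d/(3L_d)$ leaves at least two of the three evaluation points outside the $d$-th strips, one term $\omega_{g,d}(\delta_d)$ added per coordinate, $3^{D_0}$-fold copying for the complexity, and the mean value theorem for the last claim; your explicit induction over the partially relaxed good set and your endpoint check fill in what the paper only sketches. One bookkeeping remark, which applies equally to the paper: a separate input-clipping layer adds a hidden layer beyond the stated depth $\widetilde M+2D_0$, but clipping can simply be omitted, because $\widetilde f$ is a ReLU network defined on all of $\mathbb R^{D_0}$ and the two good points used in each median step always lie inside $\Omega_{D_0}$.
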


\begin{proof}
The proof is the coordinatewise median-extension argument of
Theorem~2.1 in \citet{lu2021deep}. At step \(d\), use the three
versions of the current network obtained by leaving the \(d\)-th
coordinate unchanged and shifting it by \(\pm\delta_d\), with exact
ReLU clipping at the endpoints \(\pm s_d\), and take their median.
Because \(\delta_d\leq2s_d/(3L_d)\), at least two of these three
arguments avoid the \(d\)-th collection of boundary strips. The
median step therefore extends the approximation across those strips
and increases the error by at most \(\omega_{g,d}(\delta_d)\).
Repeating this operation for \(d=1,\ldots,D_0\) removes all coordinate
strips and gives the displayed error bound. The median of three real
numbers and the clipping maps are represented exactly by fixed-size
ReLU networks. Sequential application of the construction triples
the width and size at each coordinate and adds two layers, yielding
the stated complexity bounds. Finally, Assumption~\ref{ass2} and the
one-dimensional mean value theorem give
\(\omega_{f_0^{({D_0})},d}(\delta_d)\leq C_0w_d\delta_d\).
\end{proof}

\end{document}